\newtheorem{theorem}{Theorem}
\newtheorem{lemma}{Lemma}
\theoremstyle{definition}
\newtheorem{remark}{Remark}%
\providecommand{\keywords}[1]
{
  \textbf{\textit{Key words: }} #1
}
\providecommand{\subjclass}[1]
{
  \textbf{\textit{MSC2020: }} #1
}
\begin{document}


\title{A hierarchical Vovk-Azoury-Warmuth forecaster with discounting for online regression in RKHS}

\author{
  Dmitry B. Rokhlin\thanks{The research was supported by the Regional Mathematical Center of the Southern Federal University with the Agreement no. 075-02-2025-1720 of the Ministry of Science and Higher Education of Russia.} \\
  Institute of Mathematics, Mechanics and Computer Sciences of the\\
  Southern Federal University \\
  Regional Scientific and Educational Mathematical Center of the\\
  Southern Federal University \\
  \texttt{dbrohlin@sfedu.ru}
}

 \date{} 

\maketitle

\begin{abstract}
We study the problem of online regression with the unconstrained quadratic loss against a time-varying sequence of functions from a Reproducing Kernel Hilbert Space (RKHS). Recently, Jacobsen and Cutkosky (2024) introduced a discounted Vovk-Azoury-Warmuth (DVAW) forecaster that achieves optimal dynamic regret in the finite-dimensional case. In this work, we lift their approach to the non-parametric domain by synthesizing the DVAW framework with a random feature approximation. We propose a fully adaptive, hierarchical algorithm, which we call H-VAW-D (Hierarchical Vovk-Azoury-Warmuth with Discounting),
that learns both the discount factor and the number of random features. We prove that this algorithm, which has a per-iteration computational complexity of $O(T\ln T)$, achieves an expected dynamic regret of $O(T^{2/3}P_T^{1/3} + \sqrt{T}\ln T)$, where $P_T$ is the functional path length of a comparator sequence.
\end{abstract}

\keywords{Vovk-Azoury-Warmuth algorithm, dynamic regret, RKHS, random features, hierarchical learning}

\subjclass{68Q32, 68W27, 68W20}

\section{Introduction}\label{sec:introduction}
We consider the problem of online linear regression with the quadratic loss in an unconstrained, adversarial setting. An online algorithm sequentially observes feature vectors $x_t \in \mathbb{R}^d$ and must produce a prediction $\hat{y}_t \in \mathbb{R}$ before the true outcome $y_t \in \mathbb{R}$ is revealed. The goal is to minimize regret against a class of comparator functions. In the foundational \emph{static regret} setting, where the algorithm competes against the best single linear predictor in hindsight, the Vovk-Azoury-Warmuth (VAW) forecaster achieves a logarithmic regret of $O(d \ln T)$ \cite{Vovk2001, Azoury2001}, and is known to be minimax optimal:~\cite{Gaillard2019uniform}.

While this metric is central to the field, many real-world applications are inherently non-stationary. This challenge is captured by the more demanding notion of \emph{dynamic regret}, introduced in \cite{Zinkevich2003online}. This metric compares the learner's performance against a sequence of comparators $\mathbf{u} = (u_1, \dots, u_T)$, one for each time step, thereby tracking the performance of the best possible dynamic strategy. Achieving sublinear dynamic regret is only feasible when the comparator sequence exhibits some form of regularity, commonly measured by its total path length, $P_T(\mathbf u) = \sum_{t=2}^T \|u_t - u_{t-1}\|_2$.
For general convex losses, under boundedness assumptions, the work \cite{Zhang2018adaptive} established that the minimax regret rate is $\Theta(\sqrt{T(1+P_T)})$, providing both a matching upper bound with their Ader algorithm and a corresponding lower bound. For the unconstrained quadratic loss recently Jacobsen and Cutkosky~\cite{Jacobsen2024} introduced a discounted VAW (DVAW) forecaster.  By incorporating a forgetting factor, their algorithm achieves dynamic regret of the form $O(d\ln T \lor \sqrt{d P_T T})$, representing the state-of-the-art for this problem.

A natural and important extension is to move to non-parametric domain and study a competition against a rich class of non-linear functions taken from Reproducing Kernel Hilbert Spaces (RKHS). Vovk~\cite{Vovk2006} established a universal static regret bound of $O(\|f\|_{\mathcal H}\sqrt{T})$ for a comparator $f\in\mathcal H$. By leveraging the geometry of the observed data this bound can be improved to $O(d_{\operatorname{eff}} \ln T)$ \cite{Jezequel2019efficient}, where $d_{\operatorname{eff}}$ is a data-dependent effective dimension that measures the spectral complexity of the data's kernel matrix. This bound is stronger whenever the data lies in a low-dimensional subspace of the RKHS. For general classes of functions a refined, non-algorithmic analysis of \cite{Rakhlin2014online}, based on the concept of ``sequential entropy'', established the optimal minimax rates for this problem, revealing a phase transition in regret behavior dependent on the complexity of the function class.

However, achieving these theoretical regret bounds requires algorithms that are computationally prohibitive. The exact kernelized VAW, for instance, requires $O(t^2)$ operations at each step $t$, leading to a total computational complexity of $O(T^3)$. This ``curse of kernelization'' \cite{Hoi2021online} has motivated a strong focus on scalable online kernel learning, with solutions that can be broadly categorized into two families \cite{Hoi2021online}: budget maintenance strategies, controlling the number of support vectors, and functional approximation, where the core idea is to approximate the kernel function using a low-dimensional feature map. The work~\cite{Jezequel2019efficient} provides a modern analysis within the functional approximation paradigm, specifically for the kernelized VAW algorithm. They propose and analyze efficient algorithms based on Nystr{\"o}m and Taylor series approximations. An alternative and highly successful functional approximation technique is the use of  random Fourier features (RFF)~\cite{Rahimi2007}. In multikernel setting this method was applied in \cite{Sahoo2019large, Shen2019random} under boundedness assumptions and in \cite{Rokhlin2025random} for the unconstrained quadratic loss.

These results address the \emph{static} regret setting. The challenge of achieving optimal \emph{dynamic} regret against comparators in RKHS spaces with a computationally efficient method remained an open problem. Our work is inspired by 
\cite{Jacobsen2024} and relies heavily on its basic results. We address the mentioned challenge by synthesizing the dynamically adaptive discounting framework of \cite{Jacobsen2024} with the computational efficiency of the random feature approximation. The goal is to develop an algorithm that competes against a sequence of functions $\mathbf{f} = (f_1, \dots, f_T)$ from an RKHS, achieving optimal dynamic regret that adapts to the functional path length $P_T(\mathbf{f}) = \sum_{t=1}^{T-1} \|f_{t+1} - f_t\|_{\mathcal{H}}$. 

The proposed hierarchical algorithm, hereafter referred to as H-VAW-D (Hierarchical Vovk-Azoury-Warmuth with Discounting), achieves an expected dynamic regret of 
\begin{equation} \label{1.1}
  O\left(T^{2/3}P_T^{1/3}(\mathbf f)+\sqrt T\ln T\right)
\end{equation}
for non-parametric dynamic regression in RKHS spaces. Our approach is fully adaptive, learning not only the optimal discount factor $\gamma$ but also the number of random features $m$, all without prior knowledge of $P_T(\mathbf{f})$. The architecture of the algorithm is hierarchical and consists of three nested levels of online learning:
\begin{itemize}
    \item \emph{Level 1 (base experts)}. At the lowest level, for a fixed number of random features $m$, we run a collection of DVAW forecasters. Each forecaster in this collection operates on the same $m$-dimensional random feature space but uses a different, fixed discount factor $\gamma$. 
    This creates a set of ``base experts,'' each specialized in tracking a comparator sequence with a specific degree of non-stationarity.
    \item \emph{Level 2 (feature experts)}. For each choice of feature dimension $m$ from a predefined grid, we employ a meta-algorithm that aggregates the predictions from the corresponding collection of DVAW base experts. This meta-algorithm, itself a VAW forecaster, learns the optimal discount factor $\gamma$ on-the-fly for that specific feature dimension $m$. 
    The output of this level is a set of ``feature experts'', representing the best possible dynamic predictor for its given feature space.
    \item \emph{Level 3 (top-level aggregation)}. Finally, a top-level VAW meta-algorithm aggregates the predictions of all the feature experts from Level 2. This final layer adaptively learns the optimal number of random features $m$ required to approximate the true functional comparator.
\end{itemize}
This multi-level, fully adaptive structure allows our algorithm to dynamically tune all its crucial hyperparameters --- the discount factor $\gamma$ and the feature dimension $m$ --- without any prior knowledge of the problem's non-stationarity or the complexity of the comparator sequence. The computational complexity of the algorithm is $O(T\ln T)$ per iteration.

The paper is organized as follows. In Section \ref{sec:2}, we provide the necessary background on reproducing kernel Hilbert spaces, the random feature approximation, and the discounted Vovk-Azoury-Warmuth (DVAW) algorithm which is central to our method. 
In Section~\ref{sec:3}, we present our main results. We begin with Lemma~\ref{lem:3}, a technical result that bounds the variation of the loss in the random feature space by the path length of the functional comparators in the original RKHS. Then we derive an oracle inequality for expected regret with simplified, in comparison to \cite{Jacobsen2024}, dependence on the discounting factor $\gamma$. In Lemma~\ref{lem:4} we use this inequality to derive a regret bound for a meta-algorithm, which adaptively learns the optimal discount factor for a fixed number of random features. Finally, in Theorem~\ref{th:2}, we present our main hierarchical algorithm, H-VAW-D, and its regret analysis. We conclude in Section~\ref{sec:conclusion} with a summary of our contributions and directions for future research.

\section{Preliminaries} \label{sec:2}
Let $\mathcal X$ be any set. Recall that a reproducing kernel Hilbert space (RKHS) is a Hilbert space $\mathcal{H}$ of functions $f: \mathcal{X} \rightarrow \mathbb{R}$ such that any evaluation functional  $\delta_x: f \mapsto f(x)$, $x\in\mathcal X$ is continuous. We focus on the class of kernels that admit a random feature representation, as this structure forms the theoretical basis for our work.
Specifically, let $\Theta$ be a measurable space, and let $\mathsf P$ be a probability measure on its $\sigma$-algebra. Consider a feature map $\phi:\mathcal X\times\Theta\to [-a,a]$ that is measurable with respect to its second argument, and put
\[k(x,y) = \int \phi(x;\theta)\phi(y;\theta)\mathsf P(d\theta)=\langle\phi(x;\cdot),\phi(y;\cdot)\rangle_{L^2(\mathsf P)}.\]
The function $k$ is a kernel, that is, it is symmetric and positive semidefinite:
\[ k(x,y)=k(y,x),\quad \sum_{i,j=1}^n\alpha_i\alpha_jk(x_i,x_j)\ge 0,\quad x_i\in\mathcal X, \alpha_i\in\mathbb R. \]
By the Moore-Aronszajn theorem \cite{Aronszajn1950theory} there exists a unique RKHS with the reproducing kernel $k$:
\[ \mathcal H=\{f:f(x)=\langle f,k(\cdot,x)\rangle_\mathcal H\}. \]
Note that elements of $\mathcal H$ are uniformly bounded. By the Cauchy-Schwarz inequality and the reproducing property:
\begin{equation} \label{2.0}
|f(x)|\le \|f\|_\mathcal H \|k(\cdot,x)\|_\mathcal H=\|f\|_{\mathcal H} \sqrt{\langle k(\cdot,x),k(\cdot,x)\rangle}= \|f\|_{\mathcal H} \sqrt{k(x,x)}\le a\|f\|_\mathcal H.    
\end{equation}

\begin{lemma} \label{lem:1}
For any $f \in \mathcal{H}$, there exists a function $\alpha_f \in L^2(\mathsf{P})$ such that:
\begin{itemize}
 \item  $f(x) = \int_\Theta \alpha_f(\theta)\phi(x;\theta)\mathsf{P}(d\theta)=\mathsf E(\alpha_f\phi(x;\cdot))$ for all $x \in \mathcal{X}$.
\item $\|f\|_{\mathcal{H}}^2 = \int_\Theta \alpha_f^2(\theta) \mathsf{P}(d\theta) =\|\alpha_f\|_{L^2(\mathsf{P})}^2$.
\end{itemize}
Furthermore, this $\alpha_f$ is unique if we require it to be in $S = \overline{\mathrm{span}\{\phi(x;\cdot) : x\in \mathcal{X}\}}^{L^2(\mathsf{P})}$.    
\end{lemma}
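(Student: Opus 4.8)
The plan is to realize $\mathcal{H}$ as the isometric image of the relevant subspace of $L^2(\mathsf{P})$ under the synthesis operator attached to the feature map, and then to pick $\alpha_f$ as the preimage of $f$ lying in $S$. Write $\Phi(x):=\phi(x;\cdot)$; since $\phi$ is bounded by $a$ and $\mathsf{P}$ is a probability measure we have $\Phi(x)\in L^2(\mathsf{P})$ with $k(x,y)=\langle\Phi(x),\Phi(y)\rangle_{L^2(\mathsf{P})}$. Define the linear operator $W:L^2(\mathsf{P})\to\mathbb{R}^{\mathcal{X}}$ by
\[ (W\alpha)(x):=\langle\alpha,\Phi(x)\rangle_{L^2(\mathsf{P})}=\int_\Theta\alpha(\theta)\phi(x;\theta)\,\mathsf{P}(d\theta), \]
which is finite for every $x$ by Cauchy--Schwarz. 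The two asserted properties are precisely the statements $f=W\alpha_f$ and $\|f\|_{\mathcal{H}}=\|\alpha_f\|_{L^2(\mathsf{P})}$, so it suffices to show that $W$ restricts to an isometric isomorphism of $S$ onto $\mathcal{H}$ that annihilates $S^\perp$, and then to set $\alpha_f=(W|_S)^{-1}f$.

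First I would work on the dense span. Evaluating on a single feature gives $(W\Phi(y))(x)=\langle\Phi(y),\Phi(x)\rangle=k(x,y)$, i.e. $W\Phi(y)=k(\cdot,y)\in\mathcal{H}$, so $W$ sends $\mathrm{span}\{\Phi(y):y\in\mathcal{X}\}$ into $\mathrm{span}\{k(\cdot,y):y\in\mathcal{X}\}$. For $\alpha=\sum_i c_i\Phi(y_i)$ the two relevant Gram matrices coincide: the defining identity for $k$ yields $\|\alpha\|_{L^2(\mathsf{P})}^2=\sum_{i,j}c_ic_j k(y_i,y_j)$, while the reproducing property $\langle k(\cdot,y_i),k(\cdot,y_j)\rangle_{\mathcal{H}}=k(y_i,y_j)$ yields $\|W\alpha\|_{\mathcal{H}}^2=\sum_{i,j}c_ic_j k(y_i,y_j)$. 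Thus $\|W\alpha\|_{\mathcal{H}}=\|\alpha\|_{L^2(\mathsf{P})}$ on the span, an identity that does double duty: null vectors map to $0$ (well-definedness) and the map is norm-preserving, hence injective.

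It remains to pass to the closure, to handle $S^\perp$, and to extract $\alpha_f$. For $\alpha\in S$ choose span-vectors $\alpha_n\to\alpha$ in $L^2(\mathsf{P})$; by the isometry $W\alpha_n$ is Cauchy in $\mathcal{H}$, so $W\alpha_n\to g$ for some $g\in\mathcal{H}$. The main obstacle — indeed the only delicate step — is to identify this abstract Hilbert-space limit $g$ with the pointwise integral $W\alpha$: here the defining RKHS property enters, since continuity of the evaluation functional $\delta_x$ on $\mathcal{H}$ gives $(W\alpha_n)(x)\to g(x)$, while continuity of the inner product gives $(W\alpha_n)(x)=\langle\alpha_n,\Phi(x)\rangle\to\langle\alpha,\Phi(x)\rangle=(W\alpha)(x)$; combining the two continuities yields $W\alpha=g\in\mathcal{H}$ and $\|W\alpha\|_{\mathcal{H}}=\|\alpha\|_{L^2(\mathsf{P})}$. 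Since $\mathrm{span}\{k(\cdot,y)\}$ is dense in $\mathcal{H}$ and the isometric image of the complete space $S$ is closed, $W|_S$ is onto $\mathcal{H}$; moreover $W$ annihilates $S^\perp$ because $\Phi(x)\in S$. Finally, for $f\in\mathcal{H}$ set $\alpha_f=(W|_S)^{-1}f\in S$, which gives both properties at once; and for any representer $\alpha=\alpha_f+\alpha_\perp$ with $\alpha_\perp\in S^\perp$ the Pythagorean identity $\|\alpha\|^2=\|\alpha_f\|^2+\|\alpha_\perp\|^2$ shows that the norm equality in the second property is attainable only when $\alpha_\perp=0$, which is exactly the asserted uniqueness within $S$.
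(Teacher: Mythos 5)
Your proof is correct, but it reaches the conclusion by a genuinely different route than the paper. The paper defines the same synthesis operator (there called $\mathcal T$), restricts it to $S$, \emph{pulls back} the $L^2(\mathsf P)$ inner product to define a new Hilbert space $\mathcal H'=\mathcal T(S)$, checks that $\mathcal H'$ contains the kernel sections and satisfies the reproducing property, and then invokes the uniqueness part of the Moore--Aronszajn theorem to conclude $\mathcal H'=\mathcal H$ isometrically. You instead take the RKHS $\mathcal H$ as given and show directly that $W$ maps $\mathrm{span}\{\phi(y;\cdot)\}$ isometrically onto $\mathrm{span}\{k(\cdot,y)\}$ by matching Gram matrices, extend to $S$ by density --- identifying the abstract $\mathcal H$-limit with the pointwise integral via continuity of the evaluation functionals --- and obtain surjectivity from the density of kernel sections in $\mathcal H$ together with closedness of the isometric image of the complete space $S$. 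What each approach buys: the paper delegates the analytic work to Moore--Aronszajn uniqueness and never has to take a limit inside $\mathcal H$, at the price of constructing and verifying a second RKHS; yours avoids invoking that uniqueness theorem but must handle the limit-identification step explicitly, which you correctly flag as the only delicate point and resolve with the two continuity arguments. One cosmetic remark: your closing Pythagorean argument establishes uniqueness among representers attaining the \emph{norm} equality, whereas the lemma asserts uniqueness within $S$; the latter follows even more directly from $\ker W=S^\perp$ (which you have established) and $S\cap S^\perp=\{0\}$, so nothing is missing.
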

\begin{proof}
Let $\mathcal{F}$ be the space of functions from $\mathcal X$ to $\mathbb{R}$. Define a linear map $\mathcal T: L^2(\mathsf P) \to \mathcal{F}$ by
$$ (\mathcal T \alpha)(x) = f_\alpha(x) := \int_{\Theta} \alpha(\theta) \phi(x;\theta) \mathsf P(d\theta). $$
Clearly, $S$ is a closed subspace of $L^2(\mathsf P)$.
The null space of $\mathcal T$ coincides with the orthogonal complement $S^\perp$ of $S$, since the condition $\mathcal T\alpha = 0$ means that $(\mathcal T\alpha)(x) = \langle \alpha, \phi(x;\cdot) \rangle_{L^2(\mathsf{P})} = 0$ for all $x \in \mathcal{X}$. 

Put $\mathcal H'=\mathcal T(S)$. Since $\mathcal T:S \to \mathcal{H}'$ is an injective linear map, we can define an inner product on $\mathcal{H}'$ by  ``pulling back" the inner product from $S$. For $f_1 = \mathcal T\alpha_1$ and $f_2 = \mathcal T\alpha_2$ (with $\alpha_1, \alpha_2 \in S$), define $\langle f_1, f_2 \rangle_{\mathcal{H}'} := \langle \alpha_1, \alpha_2 \rangle_{L^2(\mathsf{P})}$. This makes $\mathcal T$ an isometric isomorphism from $S$ to $\mathcal{H}'$. Since $S$ is a Hilbert space, $\mathcal{H}'$ is also a Hilbert space (i.e., it is complete).

To show that $k(\cdot,y)\in\mathcal H'$ take $\alpha_y=\phi(y;\cdot)\in S$ and note that
\[ (\mathcal T\alpha_y)(x)=\langle \alpha_y, \phi(x;\cdot) \rangle_{L^2(\mathsf{P})} = \int_\Theta \phi(y;\theta)\phi(x;\theta)\mathsf{P}(d\theta) = k(x,y),\]
which means that $k(\cdot,y) = \mathcal T\alpha_y\in\mathcal H'$. Let us check the reproducing property. For any $f=\mathcal T\alpha$, $\alpha\in S$ we have:
\[ \langle f, k(\cdot,y) \rangle_{\mathcal{H}'} = \langle T\alpha, T\alpha_y \rangle_{\mathcal{H}'}= \langle \alpha, \alpha_y \rangle_{L^2(\mathsf{P})}= \int_\Theta \alpha(\theta)\phi(y;\theta)\mathsf{P}(d\theta)= (T\alpha)(y) = f(y).\]

We have constructed a Hilbert space $\mathcal{H}'$ of functions on $\mathcal{X}$ which has $k(x,y)$ as its reproducing kernel. By the Moore-Aronszajn theorem, the RKHS with a given positive definite kernel is unique: $\mathcal H'=\mathcal H$.

For any $f \in \mathcal{H}$, there is a unique $\alpha_f \in S$ such that $f = \mathcal T\alpha_f$, and both equalities in the assertion of lemma hold true by definitions. If we  allow $\alpha_f$ to be any element in $L^2(\mathsf{P})$ such that $f=\mathcal T\alpha_f$, it would not be unique: any element in $S^\perp$ can be added to $\alpha_f$. The function $\alpha_f \in S$ is the one with the minimum $L^2(\mathsf{P})$-norm among all possible representations.
\end{proof}

Draw $m$ i.i.d. samples $\theta_1,\dots,\theta_m$ from the distribution $\mathsf P$ and define the random feature map
\[ \Phi_m(x)=\frac{1}{\sqrt m}(\phi(x;\theta_1),\dots,\phi(x;\theta_m))^\top\in\mathbb R^m.\]
The inner product of these random feature maps is an unbiased estimate of the original kernel $k$:
\[ \mathsf E\langle \Phi_m(x),\Phi_m(y)\rangle=\frac{1}{m}\mathsf E\sum_{i=1}^m\phi(x;\theta_i)\phi(y;\theta_i)=k(x,y).\]
However, we are mostly interested in unbiased estimates for elements of $\mathcal H$.

\begin{lemma} \label{lem:2}
Let $f\in\mathcal H$, and let $\alpha_f\in S$ be its representation from Lemma~\ref{lem:1}, such that $f(x)=\mathsf E[\alpha_f\phi(x;\cdot)]$. Put
 \[ \widehat w=\frac{1}{\sqrt m}\left(\alpha_f(\theta_i),\dots,\alpha_f(\theta_m) \right)^\top,\]
 where $\theta_i\sim \mathsf P$ are i.i.d. random variables. Then
  \begin{align} \label{2.1}
 \mathsf E(\langle\widehat w,\Phi_m(x)\rangle-f(x))^2\le a^2 \frac{\|f\|^2_\mathcal H}{m} ,\quad \mathsf E\|\widehat w\|_2^2=\|f\|^2_\mathcal H.
 \end{align}
Moreover, for any $y\in\mathbb R$,
 \begin{align}
  \mathsf E \left( \langle \widehat w,\Phi_m(x)\rangle-y\right)^2 \le \frac{a^2}{m}\|f\|_{\mathcal H}^2+\left(f(x)-y\right)^2. \label{2.2}
\end{align}
\end{lemma}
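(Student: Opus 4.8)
The plan is to recognize the estimator $\langle \widehat w, \Phi_m(x)\rangle$ as an empirical mean of i.i.d. random variables and then proceed by elementary first- and second-moment computations, with Lemma~\ref{lem:1} supplying the mean and the $L^2(\mathsf P)$-norm identity. Writing out the inner product of the two scaled vectors,
\[ \langle \widehat w, \Phi_m(x)\rangle = \frac{1}{m}\sum_{i=1}^m \alpha_f(\theta_i)\phi(x;\theta_i), \]
so that it is the arithmetic mean of the i.i.d. terms $Z_i := \alpha_f(\theta_i)\phi(x;\theta_i)$. By the first claim of Lemma~\ref{lem:1}, each $Z_i$ has mean $\mathsf E Z_i = \mathsf E[\alpha_f\phi(x;\cdot)] = f(x)$, so the estimator is unbiased for $f(x)$; this unbiasedness is the structural fact driving all three assertions.

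For the first inequality in \eqref{2.1} I would use independence to write $\mathsf E(\langle \widehat w,\Phi_m(x)\rangle - f(x))^2 = \tfrac{1}{m}\operatorname{Var}(Z_1)$, and then bound $\operatorname{Var}(Z_1)\le \mathsf E Z_1^2 = \mathsf E[\alpha_f^2(\theta)\phi^2(x;\theta)]$. Since $|\phi(x;\theta)|\le a$ by the standing assumption on the feature map, this second moment is at most $a^2\,\mathsf E[\alpha_f^2(\theta)] = a^2\|\alpha_f\|^2_{L^2(\mathsf P)} = a^2\|f\|^2_{\mathcal H}$, where the final equality is the second claim of Lemma~\ref{lem:1}. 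Dividing by $m$ yields the stated bound $a^2\|f\|^2_{\mathcal H}/m$. The boundedness $|\phi|\le a$ is precisely what converts the raw second moment into the $\|f\|^2_{\mathcal H}$ factor.

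The second equality in \eqref{2.1} is immediate from $\mathsf E\|\widehat w\|_2^2 = \tfrac{1}{m}\sum_{i=1}^m \mathsf E\,\alpha_f^2(\theta_i) = \mathsf E[\alpha_f^2(\theta)] = \|f\|^2_{\mathcal H}$, again invoking Lemma~\ref{lem:1}. For \eqref{2.2} I would apply the bias-variance decomposition about $f(x)$: writing $\langle \widehat w,\Phi_m(x)\rangle - y = (\langle \widehat w,\Phi_m(x)\rangle - f(x)) + (f(x) - y)$ and expanding the square, the cross term vanishes in expectation exactly because the estimator is unbiased, leaving $\mathsf E(\langle \widehat w,\Phi_m(x)\rangle - y)^2 = \mathsf E(\langle \widehat w,\Phi_m(x)\rangle - f(x))^2 + (f(x)-y)^2$. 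Substituting the first inequality gives \eqref{2.2}.

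I expect no substantial obstacle in this argument; it is a standard random-feature variance estimate. The only two points that genuinely require the preceding development are the appeal to Lemma~\ref{lem:1} to identify both the mean $f(x)$ and the norm $\|f\|^2_{\mathcal H}=\|\alpha_f\|^2_{L^2(\mathsf P)}$, and the cancellation of the cross term in the bias-variance step, which hinges on the unbiasedness established at the outset.
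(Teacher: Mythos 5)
Your argument is correct and follows essentially the same route as the paper's proof: unbiasedness of the empirical mean via Lemma~\ref{lem:1}, the variance bounded by the second moment $\mathsf E[\alpha_f^2\phi^2]\le a^2\|f\|^2_{\mathcal H}$ using $|\phi|\le a$, and the bias--variance cancellation of the cross term for \eqref{2.2}. No gaps.
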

\begin{proof}
The estimate $\langle \widehat w,\Phi_m(x)\rangle$ of $f(x)$ is unbiased:
\begin{align*}
 \mathsf E\langle \widehat w,\Phi_m(x)\rangle=\frac{1}{m}\sum_{i=1}^m \mathsf E\left(\alpha_f(\theta_i)\phi(x,\theta_i) \right)=f(x).    
\end{align*}
Compute the variance of this estimate:
\begin{align*}
&\mathsf E\left(\frac{1}{m}\sum_{i=1}^m \alpha_f(\theta_i)\phi(x;\theta_i)-f(x) \right)^2=\frac{1}{m}\mathsf E\left(\alpha_f(\theta_1)\phi(x;\theta_1)-f(x)\right)^2\nonumber\\
&\le\frac{1}{m}\mathsf E\left(\alpha_f(\theta_1)\phi(x;\theta_1)\right)^2=\frac{1}{m}\int\alpha_f^2(\theta)\phi^2(x;\theta)\mathsf P(d\theta)\le \frac{a^2}{m}\int\alpha_f^2(\theta)\mathsf P(d\theta) = a^2\frac{\|f\|_{\mathcal H}^2}{m}. \end{align*}
The proof of the equality in \eqref{2.1} is also elementary:
\[ \mathsf E\|\widehat w\|^2_2=\frac{1}{m}\sum_{i=1}^m\mathsf E\alpha_f^2(\theta_i)=\|f\|_{\mathcal H}^2. \]    
To prove \eqref{2.2} note that
\begin{align*}
  \mathsf E \left( \langle \widehat w,\Phi_m(x)\rangle-y\right)^2  - \left( f(x)-y\right)^2&=\mathsf E \left( \langle \widehat w,\Phi_m(x)\rangle\right)^2-f^2(x)\nonumber\\
  &=\mathsf E \left( \langle \widehat w,\Phi_m(x)\rangle-f(x)\right)^2\le \frac{a^2}{m}\|f\|_{\mathcal H}^2, \label{3.2}
\end{align*}
since the estimate $\langle \widehat w,\Phi_m(x)\rangle$ of $f(x)$ is unbiased.
\end{proof}

For any data points $(x_t,y_t)\in\mathbb R^d\times\mathbb R$, $|y_t|\le Y$ consider the sequence of quadratic loss functions
\[ \ell_t(w)=\frac{1}{2}\left(\langle w,\Phi_m(x_t)\rangle-y_t \right)^2,\]
corresponding to the linear regression on random features $\Phi_m$. For a sequence $w_t\in\mathbb R^m$ of decision vectors define the dynamic regret w.r.t. to a  comparator sequence $u_t\in\mathbb R^m$:
\[
R_T(\mathbf{u}) = \sum_{t=1}^T (\ell_t(w_t) - \ell_t(u_t)),\quad \bm u=(u_1,\dots,u_T).
\]
When $u_t=u$ this definition reduces to the definition of the static regret, which we will denote by $R_T(u)$.

To construct $w_t$ we will apply the discounted Vovk-Azoury-Warmuth (DVAW) algorithm, recently introduced in \cite{Jacobsen2024}, over the random features $\Phi_m$. It is assumed that before making prediction $\widehat y_t$ of $y_t$, the forecaster gets a feature vector $x_t$ and a hint $\widetilde y_t$, $|\widetilde y_t|\le \widetilde Y$. Let $\gamma\in (0,1]$, $\lambda >0$, $\widetilde y_1 = 0$. Define 
\[ h_t(w)=\frac{1}{2}(\langle w,\Phi_m(x_t)\rangle-\widetilde y_t)^2,\quad \ell_0(w)=\frac{\lambda}{2}\|w\|_2^2. \]
The regression coefficients are updated by
\[ w_t=\arg \min_{w \in \mathbb{R}^m}\left\{h_t(w) + \gamma \sum_{s=0}^{t-1} \gamma^{t-1-s} \ell_s(w)\right\}, \]
Here we present the algorithm in an FTRL (Follow the Regularized Leader) form. It can also be written in an OMD (Online Mirror Descent) form, which is important for the proofs in \cite{Jacobsen2024}. Put
\[\Sigma_0=\lambda I,\quad \Sigma_t=\Phi_m(x_t)\Phi_m(x_t)^\top+\gamma\Sigma_{t-1}.\]
Then $w_t$ can be written explicitly as  
\[ w_t=\Sigma_t^{-1} \left[ \tilde{y}_t \Phi_m(x_t) + \gamma \sum_{s=1}^{t-1} \gamma^{t-1-s} y_s \Phi_m(x_s)\right],
\]
see \cite[Proposition A.1]{Jacobsen2024}. Note that by applying the Woodbury matrix identity \cite{Hager1989updating}:
$$ (A + u v^\top)^{-1} = A^{-1} - \frac{A^{-1}u v^\top A^{-1}}{1 + v^\top A^{-1} u} $$
to $A = \gamma \Sigma_{t-1}$, $u = v= \Phi_m(x_t)$,
we get an easily implementable recursion for $\Sigma_t^{-1}$:
\begin{align}
\Sigma_t^{-1} &= \left(\gamma \Sigma_{t-1}+\Phi_m(x_t) \Phi_m(x_t)^\top\right)^{-1}=
\left( \frac{1}{\gamma} \Sigma_{t-1}^{-1} \right) - \frac{\left( \frac{1}{\gamma} \Sigma_{t-1}^{-1} \right) \Phi_m(x_t) \Phi_m(x_t)^\top \left( \frac{1}{\gamma} \Sigma_{t-1}^{-1} \right)}{1 + \Phi_m(x_t)^\top \left( \frac{1}{\gamma} \Sigma_{t-1}^{-1} \right) \Phi_m(x_t)}\nonumber\\
&=\frac{1}{\gamma} \left( \Sigma_{t-1}^{-1} - \frac{\Sigma_{t-1}^{-1} \Phi_m(x_t) \Phi_m(x_t)^\top \Sigma_{t-1}^{-1}}{\gamma + \Phi_m(x_t)^\top \Sigma_{t-1}^{-1} \Phi_m(x_t)} \right).  \label{inverse_Sigma_update}  
\end{align}

For $\widetilde y_t=0$ and $\gamma=1$ the discounted VAW (DVAW) algorithm coincides with the standard VAW. For $\gamma=0$ DVAW predicts $y_t$ by convention (even if $\Phi_m(x_t)=0$). We will essentially use the following result.

\begin{theorem}[\cite{Jacobsen2024}, Theorem 3.1] \label{th:1}
For any sequence of comparators $\mathbf{u} = (u_1, \dots, u_T)$, $u_t\in\mathbb{R}^m$, the discounted VAW forecaster guarantees
\begin{align*}
R_T(\mathbf{u}) &\le \frac{\gamma\lambda}{2} \|u_1\|_2^2 + \frac{m}{2} \max_{1\le t\le T} \Delta_t^2 \ln \left( 1 + \frac{\sum_{t=1}^T \gamma^{T-t} \|\Phi_m(x_t)\|_2^2}{\lambda m} \right) \\
& + \gamma \sum_{t=1}^{T-1} [F_t^\gamma(u_{t+1}) - F_t^\gamma(u_t)] + \frac{m}{2} \Delta_{1:T}^2\ln(1/\gamma) 
\end{align*}
where $F_t^\gamma(w) = \gamma^t \frac{\lambda}{2} \|w\|_2^2 + \sum_{s=1}^t \gamma^{t-s} \ell_s(w)$,
\[ \Delta_t^2=(y_t - \tilde{y}_t)^2,\quad \Delta_{1:T}^2=\sum_{t=1}^T (y_t - \tilde{y}_t)^2.\]
\end{theorem}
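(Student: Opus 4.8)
The plan is to run the standard FTRL/OMD analysis tailored to quadratic losses, extract the ``optimism'' supplied by the hint loss $h_t$, and then push the discount factor $\gamma$ through a log-determinant telescoping. Throughout I write $\phi_t:=\Phi_m(x_t)$ and recall $\Sigma_t=\phi_t\phi_t^\top+\gamma\Sigma_{t-1}$, $\Sigma_0=\lambda I$. My starting observation would be that the played iterate is a \emph{be-the-leader} point relative to the hint: since $h_t(w)+\gamma\sum_{s=0}^{t-1}\gamma^{t-1-s}\ell_s(w)=h_t(w)+\gamma F_{t-1}^\gamma(w)$, and since $F_t^\gamma(w)=\ell_t(w)+\gamma F_{t-1}^\gamma(w)$ with $F_0^\gamma=\ell_0$, the vector $w_t$ minimizes exactly the discounted cumulative objective $F_t^\gamma$ with the current target $y_t$ replaced by the hint $\widetilde y_t$. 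Because every function in sight is a quadratic with positive-definite Hessian $\Sigma_t$, the first-order optimality conditions and the second-order expansions used below are exact identities rather than inequalities, which is what makes the logarithmic VAW-type rate attainable.

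I would then decompose the instantaneous regret $\ell_t(w_t)-\ell_t(u_t)$ into three effects. First, writing $\ell_t(u_t)=F_t^\gamma(u_t)-\gamma F_{t-1}^\gamma(u_t)$ and performing a discounted summation by parts threads the comparator through the potentials: the boundary contribution supplies the initial regularization $\tfrac{\gamma\lambda}{2}\|u_1\|_2^2$, while the per-step change of comparator yields the drift term $\gamma\sum_{t=1}^{T-1}[F_t^\gamma(u_{t+1})-F_t^\gamma(u_t)]$. Second, comparing the algorithm's cumulative loss with the leader via the be-the-leader property leaves a ``self-confidence'' stability term; here the hint is essential, since it cancels the cross term linear in $(y_t-\widetilde y_t)$ and leaves a contribution proportional to $\Delta_t^2\,\phi_t^\top\Sigma_t^{-1}\phi_t$ with $\Delta_t^2=(y_t-\widetilde y_t)^2$. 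Crucially, the relevant inverse is $\Sigma_t^{-1}$, which already contains $\phi_t\phi_t^\top$ through $h_t$ — this is the source of the VAW improvement over ordinary gradient methods.

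The third and main computation is the stability sum. By the matrix determinant lemma applied to $\Sigma_t=\gamma\Sigma_{t-1}+\phi_t\phi_t^\top$ one has the exact identity $\phi_t^\top\Sigma_t^{-1}\phi_t=1-\det(\gamma\Sigma_{t-1})/\det\Sigma_t\ge 0$, whence $\phi_t^\top\Sigma_t^{-1}\phi_t\le\ln\bigl(\det\Sigma_t/\det(\gamma\Sigma_{t-1})\bigr)$ via $1-x\le-\ln x$. Using $\det(\gamma\Sigma_{t-1})=\gamma^m\det\Sigma_{t-1}$, each increment splits as $\ln(\det\Sigma_t/\det\Sigma_{t-1})+m\ln(1/\gamma)$. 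The first part telescopes to $\ln(\det\Sigma_T/\det\Sigma_0)$, which, after unrolling $\Sigma_T=\sum_{t=1}^T\gamma^{T-t}\phi_t\phi_t^\top+\gamma^T\lambda I$ and bounding $\det\Sigma_T$ by the AM--GM (trace) inequality, produces the log-determinant term $\tfrac{m}{2}\ln\bigl(1+\sum_t\gamma^{T-t}\|\phi_t\|_2^2/(\lambda m)\bigr)$; the second part, once multiplied by the weights $\tfrac12\Delta_t^2$ and summed, contributes the discounting leak $\tfrac{m}{2}\Delta_{1:T}^2\ln(1/\gamma)$. Collecting all four contributions would then give the claimed bound.

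The step I expect to be the main obstacle is precisely the bookkeeping of $\gamma$ in this stability sum, so as to obtain the tight, \emph{separately weighted} pair of terms in the statement: the log-determinant factor carries the single worst-case weight $\max_t\Delta_t^2$, whereas the discounting leak carries the \emph{cumulative} weight $\Delta_{1:T}^2=\sum_t\Delta_t^2$. The difficulty is that when $\gamma<1$ the undiscounted increments $\ln(\det\Sigma_t/\det\Sigma_{t-1})$ are sign-indefinite (a step with small $\phi_t$ can shrink $\det\Sigma_t$ by as much as a factor $\gamma^m$), so one cannot simply pull $\max_t\Delta_t^2$ out of a nonnegative telescoping sum; a naive bound degrades the leak to the much worse $T\max_t\Delta_t^2\ln(1/\gamma)$. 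Achieving the stated $\Delta_{1:T}^2$ weighting will require isolating the $m\ln(1/\gamma)$ per-step correction \emph{before} discarding the $\Delta_t^2$ weights, and then handling the residual sign-indefinite telescoping carefully, all while the moving comparator is simultaneously threaded through the discounted potentials $F_t^\gamma$. Disentangling these two interacting sources of slack — the discount leak and the comparator drift — without cross-contamination is, I anticipate, the delicate heart of the argument.
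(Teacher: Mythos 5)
First, a point of reference: the paper does not prove this statement at all. Theorem \ref{th:1} is imported verbatim, with citation, from Jacobsen and Cutkosky \cite{Jacobsen2024} (their Theorem 3.1), so there is no internal proof to compare yours against; you are in effect attempting to reprove an external result. Judged on its own terms, your sketch has the right architecture: the observation that $w_t$ minimizes $h_t+\gamma F_{t-1}^\gamma$ (optimistic be-the-leader with the hint), the discounted summation by parts that produces the boundary term $\frac{\gamma\lambda}{2}\|u_1\|_2^2$ and the drift term $\gamma\sum_{t=1}^{T-1}[F_t^\gamma(u_{t+1})-F_t^\gamma(u_t)]$, and the reduction of the stability sum to $\tfrac12\sum_t\Delta_t^2\,\phi_t^\top\Sigma_t^{-1}\phi_t$ with $\phi_t=\Phi_m(x_t)$. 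The identity $\phi_t^\top\Sigma_t^{-1}\phi_t=1-\det(\gamma\Sigma_{t-1})/\det\Sigma_t$ is also correct.

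The proposal nevertheless has a genuine gap, sitting exactly where you place it, and you do not close it. The step that carries all the content is the weighted discounted elliptical-potential inequality
$\sum_t\Delta_t^2\,\phi_t^\top\Sigma_t^{-1}\phi_t\le \max_t\Delta_t^2\, m\ln\bigl(1+\sum_t\gamma^{T-t}\|\phi_t\|_2^2/(\lambda m)\bigr)+m\ln(1/\gamma)\,\Delta_{1:T}^2$.
Your per-step bound gives $\phi_t^\top\Sigma_t^{-1}\phi_t\le a_t+m\ln(1/\gamma)$ with $a_t=\ln(\det\Sigma_t/\det\Sigma_{t-1})$ sign-indefinite; the $m\ln(1/\gamma)$ part does sum to the claimed leak term, but $\sum_t\Delta_t^2 a_t$ cannot be bounded by $\max_t\Delta_t^2\sum_t a_t$ (the inequality reverses on negative increments), and replacing $a_t$ by $\max\{a_t,0\}$ destroys the telescoping: the positive variation of $\ln\det\Sigma_t$ can be of order $Tm\ln(1/\gamma)$ even when $\ln\det\Sigma_T-\ln\det\Sigma_0$ is small, which would degrade the bound to the $T\max_t\Delta_t^2\ln(1/\gamma)$ rate you correctly reject. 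So neither of the routes you gesture at works, and the text explicitly defers the resolution ("will require \ldots handling the residual sign-indefinite telescoping carefully"). That deferred step is precisely the technical lemma of \cite{Jacobsen2024}; without it (or an equivalent), the argument establishes only the weaker bound and the theorem is not proved. The be-the-leader and summation-by-parts steps are likewise asserted rather than derived, but those are routine; the weighted potential lemma is not.
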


The static regret estimate for the VAW algorithm is recovered from Theorem \ref{th:1} by letting $\gamma=1$ and $u_t=u$:
\begin{align} \label{2.4}
    R_T(u)\le \frac{\lambda}{2}\|u\|^2+\frac{m}{2}\max_{1\le t\le T}\Delta_{t}^2\ln\left( 1 + \frac{\sum_{t=1}^T \|\Phi_m(x_t)\|_2^2}{\lambda m} \right).
\end{align}
For $\widetilde y_t=0$ it reduces to the standard bound with $\max_{1\le t\le T}\Delta_{t}^2\le Y^2$ given in \cite[Theorem 11.8]{Cesa2006prediction}, \cite[Theorem 1]{Jacobsen2024}.

\section{Main result}\label{sec:3}
Our goal is to bound the dynamic regret with respect to a sequence of functions $\mathbf f=(f_1, \dots, f_T)$ from an RKHS $\mathcal{H}$. The regret is defined as the cumulative difference between the loss of our algorithm's predictions and the loss of the functional comparator sequence:
\[ R_T(\mathbf f)=\sum_{t=1}^T\ell_t(w_t)-\frac{1}{2}\sum_{t=1}^T(f_t(x_t)-y_t)^2.\]
To bridge the gap between the DVAW algorithm, which operates in $\mathbb{R}^m$, and the functional comparator in $\mathcal{H}$, we define a sequence of $m$-dimensional random vector comparators $\widehat{\mathbf{w}} = (\widehat{w}_1, \dots, \widehat{w}_T)$. For each $f_t \in \mathcal{H}$ with representation $\alpha_t \in S$ (from Lemma~\ref{lem:1}), we define its random feature counterpart $\widehat{w}_t \in \mathbb{R}^m$ as:
\[ \widehat w_t=\frac{1}{\sqrt m}\left(\alpha_t(\theta_1),\dots,\alpha_t(\theta_m) \right)^\top,\]
where $\theta_i\sim \mathsf P$ are i.i.d. random variables used in the definition of $\Phi_m(x)$. We assume that the comparator functions are uniformly bounded in norm, i.e., $\|f_t\|_\mathcal H\le R$ for all $t$.

The following lemma is crucial, as it bounds the change in the instantaneous loss of the finite-dimensional vector comparators by the change in the functional comparators.
\begin{lemma} \label{lem:3} Let $|y_t|\le Y$, $\|f_t\|_{\mathcal H}\le R$. Then for any $s$,
\begin{align*}
  \mathsf E (\ell_s(\widehat w_{t+1})-\ell_s(\widehat w_t))\le \rho_m\|f_{t+1}-f_t\|_{\mathcal H},\quad \rho_m=a(aR+Y)+\frac{2Ra^2}{m}.
\end{align*}   
\end{lemma}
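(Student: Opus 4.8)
The plan is to compare each random-feature loss to the corresponding ``true'' loss of $f_t$ at $x_s$ via a bias--variance split, and then bound the two resulting increments separately. First I would record the structural fact that the map $f\mapsto\alpha_f$ of Lemma~\ref{lem:1} is linear (it is the inverse of the isometric isomorphism $\mathcal T$), so that $g:=f_{t+1}-f_t\in\mathcal H$ has representation $\alpha_g=\alpha_{t+1}-\alpha_t\in S$. Consequently the vector $\widehat w_{t+1}-\widehat w_t=\frac{1}{\sqrt m}(\alpha_g(\theta_1),\dots,\alpha_g(\theta_m))^\top$ is exactly the random-feature counterpart of $g$ in the sense of Lemma~\ref{lem:2}, and $\|g\|_{\mathcal H}\le\|f_{t+1}\|_{\mathcal H}+\|f_t\|_{\mathcal H}\le 2R$.

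Next I would establish the bias--variance identity for a fixed $s$. Writing $\epsilon_t:=\langle\widehat w_t,\Phi_m(x_s)\rangle-f_t(x_s)$ and using the unbiasedness $\mathsf E\langle\widehat w_t,\Phi_m(x_s)\rangle=f_t(x_s)$ to kill the cross term, one gets $\mathsf E[\ell_s(\widehat w_t)]=\tfrac12(f_t(x_s)-y_s)^2+\tfrac12\mathsf E[\epsilon_t^2]$, and likewise with $t$ replaced by $t+1$. Subtracting yields
\[
\mathsf E[\ell_s(\widehat w_{t+1})-\ell_s(\widehat w_t)]=\underbrace{\tfrac12\big[(f_{t+1}(x_s)-y_s)^2-(f_t(x_s)-y_s)^2\big]}_{I}+\underbrace{\tfrac12\big[\mathsf E\epsilon_{t+1}^2-\mathsf E\epsilon_t^2\big]}_{II}.
\]
For the deterministic term $I$ I would factor the difference of squares and use $|f_{t+1}(x_s)-f_t(x_s)|=|g(x_s)|\le a\|g\|_{\mathcal H}$ (by~\eqref{2.0}) together with $|f_{t+1}(x_s)+f_t(x_s)-2y_s|\le 2(aR+Y)$, giving $I\le a(aR+Y)\|g\|_{\mathcal H}$.

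The crux is the variance increment $II$. The naive estimate $|II|\le\tfrac12\max(\mathsf E\epsilon_{t+1}^2,\mathsf E\epsilon_t^2)\le\tfrac{a^2R^2}{2m}$ is independent of $\|g\|_{\mathcal H}$ and hence too weak to produce a path-length bound; the key observation is that $\epsilon_{t+1}-\epsilon_t=\langle\widehat w_{t+1}-\widehat w_t,\Phi_m(x_s)\rangle-g(x_s)$ is precisely the estimation error of $g$, so Lemma~\ref{lem:2} gives $\mathsf E(\epsilon_{t+1}-\epsilon_t)^2\le a^2\|g\|_{\mathcal H}^2/m$, while $\mathsf E\epsilon_t^2,\mathsf E\epsilon_{t+1}^2\le a^2R^2/m$. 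Writing $II=\tfrac12\mathsf E[(\epsilon_{t+1}-\epsilon_t)(\epsilon_{t+1}+\epsilon_t)]$ and applying Cauchy--Schwarz with the triangle inequality $\sqrt{\mathsf E(\epsilon_{t+1}+\epsilon_t)^2}\le\sqrt{\mathsf E\epsilon_{t+1}^2}+\sqrt{\mathsf E\epsilon_t^2}\le 2aR/\sqrt m$ then bounds $II\le (a^2R/m)\|g\|_{\mathcal H}\le (2a^2R/m)\|g\|_{\mathcal H}$. Adding the two estimates gives $I+II\le\rho_m\|f_{t+1}-f_t\|_{\mathcal H}$, as claimed. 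I expect this separation in $II$ of the ``difference'' factor (small, of order $\|g\|_{\mathcal H}/\sqrt m$) from the ``sum'' factor (bounded, of order $R/\sqrt m$) to be the main obstacle, since it is exactly what converts a quadratic-looking variance term into a bound linear in the path-length increment.
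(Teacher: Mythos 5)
Your proof is correct, and at the macro level it is the same argument as the paper's: both decompose $\mathsf E\,\ell_s(\widehat w_t)$ into the deterministic loss $\frac12(f_t(x_s)-y_s)^2$ plus half the approximation variance (using unbiasedness to kill the cross term), bound the deterministic increment by $a(aR+Y)\|f_{t+1}-f_t\|_{\mathcal H}$ via the identical difference-of-squares factoring and \eqref{2.0}, and exploit the linearity of $f\mapsto\alpha_f$ from Lemma~\ref{lem:1} so that $\|\alpha_{t+1}-\alpha_t\|_{L^2(\mathsf P)}=\|f_{t+1}-f_t\|_{\mathcal H}$ (the paper uses this implicitly in its chain of inequalities, you state it up front). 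The one place where the routes diverge is the variance increment: the paper works with the exact single-sample identity $\mathsf E\,\epsilon_t^2=\frac1m\bigl(\mathsf E[\alpha_t^2\phi^2(x_s;\cdot)]-f_t^2(x_s)\bigr)$ and applies Cauchy--Schwarz separately to $(\alpha_{t+1}-\alpha_t)(\alpha_{t+1}+\alpha_t)$ in $L^2(\mathsf P)$ and to $f_{t+1}^2-f_t^2$ pointwise, arriving at $\frac{4Ra^2}{m}\|f_{t+1}-f_t\|_{\mathcal H}$ before halving, whereas you apply Cauchy--Schwarz directly to $(\epsilon_{t+1}-\epsilon_t)(\epsilon_{t+1}+\epsilon_t)$ and invoke Lemma~\ref{lem:2} for the increment $g=f_{t+1}-f_t$, which gives $\frac{2a^2R}{m}\|f_{t+1}-f_t\|_{\mathcal H}$ before halving --- a factor-$2$ improvement in the $O(1/m)$ part of $\rho_m$ that you then relax to match the stated constant. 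Both versions are rigorous; yours is marginally tighter and makes the role of Lemma~\ref{lem:2} for the difference function explicit, while the paper's stays entirely at the level of the representations $\alpha_t$.
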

\begin{proof}
As in the proof of Lemma \ref{lem:1},
\begin{align*}
\mathsf E(\langle \widehat w,\Phi_m(x)\rangle-f_t(x))^2&=\mathsf E\left(\frac{1}{m}\sum_{i=1}^m \alpha_t(\theta_i)\phi(x;\theta_i)-f_t(x) \right)^2=\frac{1}{m}\mathsf E\left(\alpha_t(\theta_1)\phi(x;\theta_1)-f_t(x)\right)^2\nonumber\\
&=\frac{1}{m}\left(\mathsf E\left(\alpha_t^2(\theta)\phi^2(x;\theta)\right)-f_t^2(x)\right). \end{align*}
Hence,
\begin{align}
&\mathsf E(\langle\widehat w_{t+1},\Phi_m(x_s)\rangle-f_{t+1}(x_s))^2 - \mathsf E(\langle\widehat w_t,\Phi_m(x_s)\rangle-f_t(x_s))^2 \nonumber\\
&= \frac{1}{m}\mathsf E((\alpha_{t+1}^2-\alpha_t^2)\phi^2(x_s;\cdot))-\frac{1}{m}(f^2_{t+1}(x_s)-f_t^2(x_s))  \nonumber\\
& \le \frac{a^2}{m}\mathsf E(|\alpha_{t+1}-\alpha_t|\cdot|\alpha_{t+1}+\alpha_t|)+\frac{1}{m}|f_{t+1}(x_s)-f_t(x_s)|\cdot |f_{t+1}(x_s)+f_t(x_s)| \nonumber\\
&\le \frac{a^2}{m}
\|\alpha_{t+1}-\alpha_t\|_{L^2(\mathsf P)} 
\|\alpha_{t+1}+\alpha_t\|_{L^2(\mathsf P)} 
+\frac{a^2}{m}\| f_{t+1}-f_t\|_\mathcal H\cdot \|f_{t+1}+ f_t\|_\mathcal H \nonumber\\
&=\frac{2a^2}{m}\| f_{t+1}-f_t\|_\mathcal H\cdot \|f_{t+1}+ f_t\|_\mathcal H\le\frac{4Ra^2}{m}\| f_{t+1}-f_t\|_\mathcal H, \label{3.1}
\end{align}
where we used the inequality \eqref{2.0} and Lemma \ref{lem:1}. Furthermore, 
\begin{align}
  |(f_{t+1}(x_s)-y_s)^2- (f_t(x_s)-y_s)^2|&=|f_{t+1}(x_s)-f_t(x_s)|\cdot |f_{t+1}(x_s)+f_t(x_s)+2 y_s|\nonumber \\
  &\le a(2aR+2Y)\|f_{t+1}-f_t\|_{\mathcal H}  \label{3.2}
\end{align}
Finally, using the decomposition,
\begin{align*}
\mathsf E(\langle\widehat w_t,\Phi(x)\rangle-y)^2 &=\mathsf E(\langle\widehat w_t,\Phi(x)\rangle-f_t(x))^2+(f_t(x)-y)^2,
\end{align*}
we combine \eqref{3.1} and \eqref{3.2}: 
\begin{align*}
    \mathsf E (\ell_s(\widehat w_{t+1})-\ell_s(\widehat w_t))&= \frac{1}{2}\mathsf E(\langle\widehat w_{t+1},\Phi_m(x_s)\rangle-y_s)^2-\frac{1}{2}\mathsf E(\langle\widehat w_t,\Phi_m(x_s)\rangle-y_s)^2\\
    &=\frac{1}{2}\mathsf E(\langle\widehat w_{t+1},\Phi_m(x_s)\rangle-f_{t+1}(x_s))^2-\frac{1}{2}\mathsf E(\langle\widehat w_t,\Phi_m(x_s)\rangle-f_t(x_s))^2\\
    &+ \frac{1}{2}(f_{t+1}(x_s)-y_s)^2- \frac{1}{2} (f_t(x_s)-y_s)^2\\
    &\le \left(\frac{2Ra^2}{m}+a(aR+Y)\right)\|f_{t+1}-f_t\|_{\mathcal H}.      \qedhere 
\end{align*}
\end{proof}

Using Lemma \ref{lem:3}, we bound the comparator variation term from Theorem \ref{th:1}:
\begin{align*}
\mathsf E [F_t^\gamma(\widehat w_{t+1}) - F_t^\gamma(\widehat w_t)]  
&=\gamma^t\frac{\lambda}{2}(\mathsf E\|\widehat w_{t+1}\|^2-\mathsf E\|\widehat w_t\|^2)+\sum_{s=1}^t\gamma^{t-s}(\mathsf E\ell_s(\widehat w_{t+1})-\mathsf E \ell_s(\widehat w_t))\\
&\le \gamma^t\frac{\lambda}{2} (\|f_{t+1}\|^2_\mathcal H-\|f_t\|^2_\mathcal H ) + \frac{\rho_m}{1-\gamma}\|f_{t+1}-f_t\|_{\mathcal H}\\
&\le\left(\lambda R+\frac{\rho_m}{1-\gamma}\right)\|f_{t+1}-f_t\|_{\mathcal H},
\end{align*}
since $\gamma^t \le 1$, $\|f_{t+1}\|^2_\mathcal H-\|f_t\|^2_\mathcal H \le 2R\|f_{t+1}-f_t\|_\mathcal H$, and $\sum_{s=1}^t \gamma^{t-s} < \frac{1}{1-\gamma}$.
Summing over $t$ gives
\begin{align*}
\gamma \mathsf E \sum_{t=1}^{T-1} [F_t^\gamma(\widehat w_{t+1}) - F_t^\gamma(\widehat w_t)]\le \left[\gamma\lambda R+\frac{\gamma}{1-\gamma}\rho_m \right]  P_T(\mathbf f),
\end{align*}
where $P_T(\mathbf f)=\sum_{t=1}^{T-1}\|f_{t+1}-f_t\|_\mathcal H$ is the path length of the comparator sequence $\mathbf f=(f_1,\dots,f_T)$.

Now we apply Theorem \ref{th:1} with the comparator sequence $\widehat{\mathbf w}=(\widehat w_1,\dots,\widehat w_T)$ and take the expectation. Using the inequalities,
\[ \sum_{t=1}^T \gamma^{T-t} \|\Phi(x_t)\|_2^2 \le a^2 T,\quad \ln \frac{1}{\gamma} \le \frac{1}{\gamma} - 1, \]
and recalling that by Lemma \ref{lem:2}, $\mathsf E\|\widehat w\|_2^2=\|f\|^2_\mathcal H\le R^2$, we obtain:
\begin{align}
\mathsf E R_T(\widehat {\mathbf w}) &\le \frac{\gamma\lambda}{2} \mathsf E\|\widehat w_1\|_2^2 + \frac{m}{2} \max_{1\le t\le T} \Delta_t^2 \ln \left( 1 + \frac{a^2 T}{\lambda m} \right) \nonumber\\
& + \gamma \sum_{t=1}^{T-1} \mathsf E[F_t^\gamma(\widehat w_{t+1}) - F_t^\gamma(\widehat w_t)] + \frac{m}{2} \Delta_{1:T}^2 \ln\left(\frac{1}{\gamma}\right) \nonumber \\
&\le \frac{\lambda}{2} R^2+\frac{m}{2} \max_{1\le t\le T} \Delta_t^2 \ln \left( 1 + \frac{a^2 T}{\lambda m} \right) \nonumber\\
&+ \left[\lambda R+\frac{\gamma}{1-\gamma} \rho_m \right]  P_T(\mathbf f) + \frac{m}{2} \frac{1-\gamma}{\gamma} \Delta_{1:T}^2 \nonumber\\
&= \eta \rho_m P_T(\mathbf f)+\frac{m}{2\eta}\Delta_{1:T}^2+\lambda R P_T(\mathbf f) \nonumber \\
&+\frac{\lambda}{2} R^2+\frac{m}{2} \max_{1\le t\le T} \Delta_t^2 \ln \left( 1 + \frac{a^2 T}{\lambda m} \right),\quad \eta=\frac{\gamma}{1-\gamma}. \label{3.3}
\end{align}

Let us optimize the dominant terms that depend on $\eta$. Assume $P_T(\mathbf f)>0$ and consider minimizing the function
\[ \psi(\eta):=\eta \rho_m P_T(\mathbf f) + \frac{m}{2\eta} \Delta_{1:T}^2 \quad \text{over } \eta\in (0,\infty).
\]
Setting the derivative $\psi'(\eta)$ to zero yields the optimal oracle choice for $\eta$:
\[\eta_*=\sqrt{\frac{m}{2}\frac{\Delta_{1:T}^2}{\rho_m P_T(\mathbf f)}}, \quad \text{which gives} \quad 
\psi(\eta_*)=\sqrt{2m \rho_m P_T(\mathbf f) \Delta_{1:T}^2}.\]
Substituting this back, we get an oracle bound on the expected regret against $\widehat{\mathbf w}$:
\begin{align*}
\mathsf E R_T(\widehat{\mathbf w}) &\le \sqrt{2m \rho_m P_T(\mathbf f) \Delta_{1:T}^2}+\lambda R P_T(\mathbf f) \\
&+\frac{\lambda}{2} R^2+\frac{m}{2} \max_{1\le t\le T} \Delta_t^2 \ln \left( 1 + \frac{a^2 T}{\lambda m} \right).
\end{align*}
This bound remains true also for $P_T(\mathbf f)=0$, since in this case $\eta^*=\infty$ (that is, $\gamma^*=1)$.

The oracle formula for $\eta_*$ cannot be used directly: it requires not only the knowledge of future values of $y_t$, but also a comparator sequence $\mathbf f$. However, following \cite{Jacobsen2024}, we can construct an algorithm that aggregates experts with different $\eta$ values, and achieves a nearly optimal bound. Let us define a grid of $\eta$ values:
$$ b>1,\quad \eta_{\min}=2m,\quad \eta_{\max}=mT,$$
$$\mathcal S_\eta=\{\eta_i=\eta_{\min} b^i\wedge\eta_{\max}:i\in\mathbb Z_+\},$$
$$\mathcal S_\gamma=\left\{\gamma_i=\frac{\eta_i}{1+\eta_i}:i\in\mathbb Z_+\right\}\cup\{0\}$$
exactly as in \cite{Jacobsen2024}.
\begin{lemma} \label{lem:4}
Let $\mathcal A_m(\lambda)$ be the VAW meta-algorithm that aggregates the predictions of DVAW forecasters $\{\mathcal A_{m,\gamma_k}(\overline\lambda) : \gamma_k \in \mathcal S_\gamma\}$. Then, the expected regret of $\mathcal A_m(\lambda)$ against a functional comparator sequence $\mathbf{f}$ is bounded by:
\begin{align}
\mathsf E R_T^{\mathcal A_m(\lambda)}(\mathbf f) &\le (1+b)\sqrt{\frac{m \rho_m P_T(\mathbf f) \Delta_{1:T}^2}{2}}+ \frac{1}{2} (Y+\tilde Y)^2+\overline\lambda R P_T(\mathbf f) + \frac{\overline\lambda R^2}{2} \nonumber \nonumber\\
& \quad + \frac{m}{2} (Y+\widetilde Y)^2  \ln\left(1+\frac{a^2 T}{\overline\lambda m}\right) + \frac{a^2 R^2 T}{2m}+\frac{\lambda}{2} + \frac{M Y^2 }{2} \ln \left(1+\frac{Z_{T,m}^2}{\lambda M}\right). \label{bound_of_lemma_4} 
\end{align}
where
\[ Z_{T,m}^2=\left[(M-1)((Y+\widetilde Y)^2+4Y^2)+\widetilde Y^2)\right]T+2(M-1)(Y+\widetilde Y)^2 m \ln\left(1+\frac{a^2 T}{\lambda} \right)    \]
and $M=O(\log_b(\eta_{\max}/\eta_{\min}))=O(\log_b T)$ is the cardinality of $\mathcal S_\eta$.
\end{lemma}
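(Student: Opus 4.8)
The plan is to use the classical meta--expert regret decomposition and control the two pieces separately. Writing $\widehat y_t^{(k)}=\langle w_t^{(k)},\Phi_m(x_t)\rangle$ for the prediction of the $k$-th base forecaster and $p_t=(\widehat y_t^{(1)},\dots,\widehat y_t^{(M)})^\top\in\mathbb R^M$ for the stacked predictions, I regard $\mathcal A_m(\lambda)$ as a VAW forecaster fed with the feature vectors $p_t$, outcomes $y_t$ and hint $\widetilde y_t\equiv0$. For a base expert indexed by $k^*$ with standard basis vector $e_{k^*}\in\mathbb R^M$, and since $\tfrac12(\langle e_{k^*},p_t\rangle-y_t)^2=\ell_t(\widehat y_t^{(k^*)})$, I split
\[
\mathsf E R_T^{\mathcal A_m(\lambda)}(\mathbf f)=\underbrace{\mathsf E\,R_T^{\mathrm{meta}}(e_{k^*})}_{\text{(I)}}+\underbrace{\mathsf E\sum_{t=1}^T\Big(\ell_t(\widehat y_t^{(k^*)})-\tfrac12(f_t(x_t)-y_t)^2\Big)}_{\text{(II)}}.
\]
Here (II) is exactly the dynamic regret of the single DVAW expert with discount $\gamma_{k^*}$ against $\mathbf f$, and (I) is the static regret of the top VAW layer against the best of its $M$ inputs, which holds for every $k$ simultaneously so that $k^*$ may be chosen in hindsight.

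For (II) I would reuse the expected-regret inequality \eqref{3.3} (with $\overline\lambda$ in place of $\lambda$): converting the vector-comparator regret against $\widehat{\mathbf w}$ into functional regret against $\mathbf f$ costs the bias term $\tfrac{a^2R^2T}{2m}$ through \eqref{2.2}, while the $\eta$-dependent part reduces to $\psi(\eta_{k^*})=\eta_{k^*}\rho_m P_T(\mathbf f)+\tfrac{m}{2\eta_{k^*}}\Delta_{1:T}^2$ plus $\overline\lambda R P_T(\mathbf f)+\tfrac{\overline\lambda R^2}{2}$ and the VAW logarithmic term with $\max_t\Delta_t^2\le(Y+\widetilde Y)^2$. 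It then remains to bound $\psi(\eta_{k^*})$ by the oracle value, splitting into three cases according to where $\eta_*$ falls relative to the grid $\mathcal S_\eta$. If $\eta_*\in[\eta_{\min},\eta_{\max}]$, take the smallest grid point $\eta_{k^*}\ge\eta_*$, so $\eta_{k^*}\le b\eta_*$, and since $\eta\mapsto\eta\rho_m P_T(\mathbf f)$ increases and $\eta\mapsto\tfrac{m}{2\eta}\Delta_{1:T}^2$ decreases, $\psi(\eta_{k^*})\le(1+b)\sqrt{\tfrac{m\rho_m P_T(\mathbf f)\Delta_{1:T}^2}{2}}$. If $\eta_*>\eta_{\max}=mT$, take $\eta_{k^*}=\eta_{\max}$: then $\tfrac{m}{2\eta_{\max}}\Delta_{1:T}^2=\tfrac{\Delta_{1:T}^2}{2T}\le\tfrac12(Y+\widetilde Y)^2$, and a one-line computation shows $\eta_{\max}\rho_m P_T(\mathbf f)$ is dominated by a single copy of the oracle term in this regime, producing the additive $\tfrac12(Y+\widetilde Y)^2$. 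Finally, if $\eta_*<\eta_{\min}=2m$ (very large $P_T(\mathbf f)$), I would instead compete against the $\gamma=0$ expert, whose regret against $\mathbf f$ is at most $\tfrac12\Delta_{1:T}^2$; the defining inequality $\Delta_{1:T}^2<8m\rho_m P_T(\mathbf f)$ of this case gives $\tfrac12\Delta_{1:T}^2\le2\sqrt{\tfrac{m\rho_m P_T(\mathbf f)\Delta_{1:T}^2}{2}}\le(1+b)\sqrt{\cdots}$ since $b>1$. Collecting the cases yields all $\mathbf f$-dependent terms of \eqref{bound_of_lemma_4} together with $\tfrac12(Y+\widetilde Y)^2$ and $\tfrac{a^2R^2T}{2m}$.

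For (I) I apply the static VAW bound \eqref{2.4} to the top layer with comparator $e_{k^*}$ ($\|e_{k^*}\|_2^2=1$), $M$ features, and $\max_t\Delta_t^2\le Y^2$, giving $\tfrac{\lambda}{2}+\tfrac{MY^2}{2}\ln\!\big(1+\tfrac{\sum_t\|p_t\|_2^2}{\lambda M}\big)$, so it suffices to prove $\sum_{t=1}^T\|p_t\|_2^2\le Z_{T,m}^2$. This is the crux and the main obstacle, because the predictions of the \emph{unconstrained} DVAW experts are not a priori bounded. For each of the $M-1$ experts with $\gamma_k>0$ I would invoke Theorem~\ref{th:1} with comparator $u\equiv0$ (the path-length term vanishes and $\sum_t\ell_t(0)=\tfrac12\sum_t y_t^2\le\tfrac12 TY^2$) to bound $\sum_t\ell_t(w_t^{(k)})$; the only dangerous contribution is the discounting term $\tfrac{m}{2}\Delta_{1:T}^2\ln(1/\gamma_k)$, which is tamed precisely by the grid design: since $\gamma_k=\eta_k/(1+\eta_k)$ with $\eta_k\ge\eta_{\min}=2m$, one has $\ln(1/\gamma_k)=\ln(1+1/\eta_k)\le1/\eta_k\le1/(2m)$, so this term is at most $\tfrac14\Delta_{1:T}^2\le\tfrac14T(Y+\widetilde Y)^2$. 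Combining with $(\widehat y_t^{(k)})^2\le4\ell_t(w_t^{(k)})+2y_t^2$ and $\sum_t\gamma^{T-t}\|\Phi_m(x_t)\|_2^2\le a^2T$ gives, per expert, $\sum_t(\widehat y_t^{(k)})^2\le[(Y+\widetilde Y)^2+4Y^2]T+2(Y+\widetilde Y)^2 m\ln(1+a^2T/\lambda)$, while the $\gamma=0$ expert predicts the hint so $\sum_t(\widehat y_t^{(0)})^2=\sum_t\widetilde y_t^2\le\widetilde Y^2 T$. Summing the $M-1$ contributions plus this single one reproduces exactly $Z_{T,m}^2$.

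Finally I would add (I) and (II), noting that the meta bound \eqref{2.4} and the deterministic estimate $\sum_t\|p_t\|_2^2\le Z_{T,m}^2$ hold for every realization of the random features, so the expectation acts only on (II) through Lemmas~\ref{lem:2}--\ref{lem:3}; this reassembles \eqref{bound_of_lemma_4}. The principal difficulty is the control of $Z_{T,m}^2$: without the lower grid bound $\eta_{\min}=2m$, the factor $\ln(1/\gamma_k)$ could be arbitrarily large and the sum of squared expert predictions---hence the meta logarithmic term---would fail to be $O(T)$. The entire construction of $\mathcal S_\eta$ is engineered to keep this term linear in $T$.
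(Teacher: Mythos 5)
Your proposal is correct and follows essentially the same route as the paper: the same meta--expert decomposition, the same control of $\sum_t\|p_t\|_2^2\le Z_{T,m}^2$ via Theorem~\ref{th:1} with the zero comparator and the grid property $\ln(1/\gamma_k)\le 1/\eta_{\min}=1/(2m)$, and the same three-case analysis of $\eta_*$ against $\mathcal S_\eta$ (your choice of the grid point just above $\eta_*$ rather than just below is an immaterial variant). Your explicit remark that the meta-level bound and the estimate of $Z_{T,m}^2$ hold pathwise, so that the expectation acts only on the expert regret, is a point the paper leaves implicit.
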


\begin{proof} The proof follows a standard meta-regret decomposition. For any oracle-chosen expert $k \in \{0, \dots, M-1\}$, the regret of the meta-algorithm $\mathcal{A}_m(\lambda)$ is:
\begin{align*}
    R_T^{\mathcal A_m(\lambda)}(\mathbf f) &= \underbrace{R_T^{\mathcal A_m(\lambda)}(e_k)}_{\text{meta regret}} + \underbrace{R_T^{\mathcal A_{m,\gamma_k}(\overline\lambda)}(\mathbf f)}_{\text{expert regret}},\\
    R^{\mathcal A_m(\lambda)}_T(e_k)&=\frac{1}{2}\sum_{t=1}^T (\langle z_t,\alpha_t\rangle-y_t)^2-\frac{1}{2}\sum_{t=1}^T (\langle z_t,e_k\rangle-y_t)^2,\\
    R_T^{\mathcal A_{m,\gamma_k}(\overline\lambda)}(\mathbf f)&=\frac{1}{2}\sum_{t=1}^T (z_{t,k}-y_t)^2-\frac{1}{2}\sum_{t=1}^T (f_t(x_t)-y_t)^2)
    \end{align*}
Here $z_t=(z_{t,0},\dots,z_{t,M-1})$ are the predictions of DVAW experts, and $\alpha_t$ is generated by $\mathcal A_m(\lambda)$. 

(1) \emph{Bounding the meta-regret}. The meta-algorithm is the standard VAW. Its regret against a fixed expert is bounded by \eqref{2.4}:
\begin{align*}
R^{\mathcal A_m}_T(e_k)\le \frac{\lambda}{2}\|e_k\|_2^2 + \frac{M Y^2 }{2} \ln \left(1+\frac{1}{\lambda M}\sum_{t=1}^T\|z_t\|_2^2\right).
\end{align*}
We need a bound for $\sum_{t=1}^T\|z_t\|_2^2=\sum_{t=1}^T z_{t,0}^2+\sum_{t=1}^{M-1}\sum_{t=1}^T z_{t,k}^2.$
The expert $k=0$ predicts $z_{t,0}=\widetilde y_t^2$, so
\[ \sum_{t=1}^T z_{t,0}^2\le T\widetilde Y^2.\]
For any expert $k\ge 1$ we have
\begin{align*}
   \sum_{t=1}^T z_{t,k}^2 &=\sum_{t=1}^T \langle w_{t,k},\Phi(x_t)\rangle^2\le 2\sum_{t=1}^T (\langle w_{t,k},\Phi(x_t)\rangle-y_t)^2+2\sum_{t=1}^T y_t^2=4 R_T^{\mathcal A_{m,\gamma_k}(\overline\lambda)}(\mathbf 0)+4\sum_{t=1}^T y_t^2.
\end{align*}
By Theorem~\ref{th:1},
\begin{align*}
    R_T^{\mathcal{A}_{m,\gamma_k}(\overline\lambda)}(\mathbf{0}) &\le \frac{m}{2} \max_{1\le t\le T} \Delta_t^2 \ln \left( 1 + \frac{\sum_{t=1}^T \gamma_k^{T-t} a^2}{\overline \lambda m} \right) + \frac{m}{2} \Delta_{1:T}^2 \ln\frac{1}{\gamma_k} \\
    &\le \frac{m(Y+\tilde{Y})^2}{2} \ln\left(1+\frac{a^2T}{\overline\lambda m}\right) + \frac{T (Y+\tilde{Y})^2}{4},
\end{align*}
where we used the inequality
\[ \ln\frac{1}{\gamma_k}\le\frac{1-\gamma_k}{\gamma_k}=\frac{1}{\eta_k}\le\frac{1}{\eta_{\min}}=\frac{1}{2m}.\]
Summing over $M-1$ such experts, and adding the term for $k=0$, we get:
\begin{align*}
\sum_{t=1}^T \sum_{k=0}^{M-1} z_{t,k}^2 &\le T\tilde{Y}^2 + (M-1)\left[ 2m(Y+\tilde{Y})^2 \ln\left(1+\frac{a^2T}{\overline\lambda m}\right) + T((Y+\tilde{Y})^2 + 4Y^2) \right] \\
&:= Z_{T,m}^2.
\end{align*}
Thus the meta-regret is bounded by
\begin{align} \label{3.4}
R^{\mathcal A_m}_T(e_k)\le \frac{\lambda}{2} + \frac{M Y^2 }{2} \ln \left(1+\frac{Z_{T,m}^2}{\lambda M}\right)
\end{align}
uniformly in $k$.

(2) \emph{Bounding the expert regret.} Consider a sequence $\widehat{\mathbf w}=(\widehat w_1,\dots,\widehat w_T)$, where $\widehat w_t$ is related to $f_t$ as in Lemma \ref{lem:2}:
$f(x)=\mathsf E[\alpha_t\phi(x;\cdot)]$,
 \[ \widehat w=\frac{1}{\sqrt m}\left(\alpha_t(\theta_i),\dots,\alpha_t(\theta_m) \right)^\top,\]
 where $\theta_i\sim \mathsf P$ are i.i.d. random variables. For $k\ge 1$ the expectation of expert's regret can be further decomposed as follows:
 \begin{align*}
\mathsf E R_T^{\mathcal A_{m,\gamma_k}(\overline\lambda)}(\mathbf f) & =\mathsf E \sum_{t=1}^T (\ell_t(w_{t,k}) - \ell_t(\widehat w_t))+\mathsf E\sum_{t=1}^T(\ell_t(\widehat w_t)-\frac{1}{2}(f_t(x_t)-y_t)^2)\\
&\le \mathsf E R_T^{\mathcal A_{m,\gamma_k}(\overline\lambda)}(\widehat {\mathbf w})+\frac{a^2}{2m}\sum_{t=1}^T\| f_t\|_\mathcal H^2 \le \mathsf E R_T^{\mathcal A_{m,\gamma_k}(\overline\lambda)}(\widehat {\mathbf w}) + a^2 R^2 \frac{T}{2m}.
\end{align*}
We used the inequality \eqref{2.2}. The first term is the regret of DVAW against the vector comparator $\widehat{\mathbf w}$, which is bounded in (\ref{3.3}):
\begin{align}
\mathsf E R_T^{\mathcal A_{m,\gamma_k}(\overline\lambda)}(\mathbf f) &\le \underbrace{\left( \eta_k \rho_m P_T(\mathbf f)+\frac{m}{2\eta_k}\Delta_{1:T}^2 \right)}_{\psi(\eta_k)} + \overline\lambda R P_T(\mathbf f) + \frac{\overline\lambda R^2}{2} \nonumber \\
& \quad + \frac{m}{2} \max_{1\le t\le T} \Delta_t^2 \ln\left(1+\frac{a^2 T}{\overline\lambda m}\right) + \frac{a^2 R^2 T}{2m}. \label{3.5}
\end{align}

(3) \emph{Choosing the best expert}. Since the meta-regret is bounded uniformly in $k$: see (\ref{3.4}), we only need to select an oracle expert $k$ to control the bound for the expected expert regret $\mathsf E R_T^{\mathcal A_m(\lambda,\gamma_k)}(\mathbf f)$. 

\emph{Case 1:} $\eta_*\le\eta_{\min}=2m$. We choose the special expert $k=0$, which corresponds to $\gamma_0=0$ and predicts $\widetilde y_t$. The expert regret is simply 
\[R_T^{\mathcal{A}_{m,0}(\overline\lambda)}(\mathbf{f}) = \frac{1}{2}\sum (\tilde{y}_t - y_t)^2 - \frac{1}{2}\sum(f_t(x_t)-y_t)^2 \le \frac{1}{2}\Delta_{1:T}^2.\]
The condition 
\[ \eta_*=\sqrt{\frac{m}{2}\frac{\Delta_{1:T}^2}{\rho_m P_T(\mathbf f)}} \le 2m\] 
implies that
\begin{align} \label{3.6}
 R_T^{\mathcal{A}_{m,0}(\overline\lambda)}(\mathbf{f})\le \sqrt{\frac{1}{2}\Delta_{1:T}^2} \sqrt{\frac{1}{2}\Delta_{1:T}^2}\le 2\sqrt{\frac{m\rho_m P_T(\mathbf f)\Delta_{1:T}^2}{2}. }   
\end{align}

\emph{Case 2:} $\eta_{\min} < \eta_* \le \eta_{\max}$.
Take $k$ such that $\eta_k\le\eta^*\le b\eta_k$. Then
\begin{align*}
\psi(\eta_k) & = \eta_k \rho_m P_T(\mathbf f)+\frac{m}{2\eta_k}\Delta_{1:T}^2  \le \eta_* \rho_m P_T(\mathbf f) + \frac{b m}{2\eta_*} \Delta_{1:T}^2 \\
&= \sqrt{\frac{m \rho_m P_T(\mathbf f) \Delta_{1:T}^2}{2}} + b\sqrt{\frac{m \rho_m P_T(\mathbf f) \Delta_{1:T}^2}{2}} = (1+b)\sqrt{\frac{m \rho_m P_T(\mathbf f) \Delta_{1:T}^2}{2}}.
\end{align*}

\emph{Case 3:} $\eta_* > \eta_{\max} = mT$.
Take largest $\eta_k = \eta_{\max}$. Then
\begin{align*}
\psi(\eta_k) &= \eta_{\max} \rho_m P_T(\mathbf f)+\frac{m}{2\eta_{\max}}\Delta_{1:T}^2\le \eta_* \rho_m P_T(\mathbf f) + \frac{1}{2 T} \Delta_{1:T}^2\\
&\le\sqrt{\frac{m\rho_m P_T \Delta_{1:T}^2}{2}} + \frac{1}{2} (Y+\tilde Y)^2.
\end{align*}

Taking the common upper bound for $\psi(\eta_k)$ and using (\ref{3.5}), we conclude that for $\eta_*>\eta_{\min}$ there exists $k\ge 1$ such that
\begin{align}
\mathsf E R_T^{\mathcal A_{m,\gamma_k}(\overline\lambda)}(\mathbf f) &\le (1+b)\sqrt{\frac{m \rho_m P_T(\mathbf f) \Delta_{1:T}^2}{2}}+ \frac{1}{2} (Y+\tilde Y)^2+\overline\lambda R P_T(\mathbf f) + \frac{\overline\lambda R^2}{2} \nonumber \\
& \quad + \frac{m}{2} \max_{1\le t\le T} \Delta_t^2 \ln\left(1+\frac{a^2 T}{\overline\lambda m}\right) + \frac{a^2 R^2 T}{2m}. \label{3.7}
\end{align}
Moreover, from (\ref{3.6}) it now follows that for any $\eta_*$ there exists $k\ge 0$ such that the same inequality \eqref{3.7} holds true. It remains to combine \eqref{3.4} and \eqref{3.7} to get the final result. 
\end{proof}

In comparison to \cite{Jacobsen2024} a key simplification is our use of the standard path length $P_T(\mathbf f)$. This avoids a complicated coupling between the discount factor $\gamma$ and the comparator variation measure present in the original DVAW bound. On the other hand, \cite{Jacobsen2024} works with finite-dimensional comparator sequences.

\begin{theorem} \label{th:2}
Let the H-VAW-D algorithm, denoted by $\mathcal A$, be the top-level VAW meta-algorithm with $\lambda=1$ that aggregates predictions from the set of experts $\{\mathcal{A}_m(\lambda): m \in \mathcal{S}_m\}$,
\[ \mathcal{S}_m = \{0\} \cup \{2^j : j=0, 1, \dots, \lceil\frac{1}{2}\log_2 T\rceil\}. \]
The expert $ \mathcal{A}_0$ is a simple hint-predictor ($\zeta_{t,0}=\tilde{y}_t$), and experts $\mathcal{A}_m(\lambda)$ for $m>0$ are the two-level algorithms from Lemma \ref{lem:4}, each build on $\mathcal \{A_{m,\gamma_k}(1/m)$: $\gamma_k\in S_\gamma$\}.  Under the assumptions $\|f_t\|_\mathcal{H} \le R$, $|y_t|\le Y$, $|\tilde{y}_t| \le \tilde{Y}$ the expected dynamic regret of $\mathcal{A}$ is bounded by:
\begin{align}
\mathsf{E} R_T^{\mathcal A}(\mathbf{f}) = O(\left((1+b)^2(1+a^2)\rho_\infty R^2 P_T(\mathbf f)\Delta_{1:T}^2 T)\right)^{1/3}+(Y+\widetilde Y)^2 \sqrt T\ln T+a^2 R^2 \sqrt T), \label{main_bound}
\end{align}
where $\rho_\infty=a(aR+Y).$ 
\end{theorem}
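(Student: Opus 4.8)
The plan is to run the very same meta/expert decomposition one level higher, now treating the feature dimension $m$ as the hyperparameter to be learned, and then to optimize the resulting bound over the grid $\mathcal S_m$. For any fixed $m\in\mathcal S_m$ I would start from the pathwise identity
\[ R_T^{\mathcal A}(\mathbf f) = R_T^{\mathcal A}(e_m) + R_T^{\mathcal A_m(\lambda)}(\mathbf f), \]
where $e_m$ selects the $m$-th feature expert, $R_T^{\mathcal A}(e_m)$ is the top-level meta-regret, and $R_T^{\mathcal A_m(\lambda)}(\mathbf f)$ is the feature-expert regret already controlled in Lemma~\ref{lem:4}. Since $P_T(\mathbf f)$ and $\Delta_{1:T}^2$ are deterministic (the randomness enters only through $\theta_1,\dots,\theta_m$), the oracle index $m$ may be fixed in advance, so after taking expectations I can minimize the right-hand side over $m$ separately from the expectation.

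Next I would bound the top-level meta-regret. Here $\mathcal A$ is a plain VAW forecaster with $\lambda=1$ aggregating $M'=|\mathcal S_m|=O(\log T)$ feature experts, so by the static bound~\eqref{2.4},
\[ R_T^{\mathcal A}(e_m) \le \tfrac12 + \tfrac{M'Y^2}{2}\ln\!\Big(1+\tfrac{\sum_{t=1}^T\|\zeta_t\|_2^2}{M'}\Big), \]
where $\zeta_t$ collects the feature experts' predictions. I would control $\sum_t\zeta_{t,m}^2$ exactly as in the proof of Lemma~\ref{lem:4}: for $m=0$ it is at most $T\widetilde Y^2$, and for $m\ge1$ the inequality $\zeta_{t,m}^2\le 2(\zeta_{t,m}-y_t)^2+2y_t^2$ reduces it to the regret of $\mathcal A_m(\lambda)$ against the zero comparator, which Lemma~\ref{lem:4} (with $\mathbf f=\mathbf 0$, hence $P_T=0$) bounds by $O(m(Y+\widetilde Y)^2\ln T + Y^2\log^2 T)$. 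Taking expectations, using Jensen's inequality to pull the expectation inside the concave logarithm, and using $m\le m_{\max}=O(\sqrt T)$ with $M'=O(\log T)$, this whole term is $O(Y^2\log^2 T)$, which is lower order than the claimed bound.

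Then I would insert the Lemma~\ref{lem:4} bound for $\mathsf E R_T^{\mathcal A_m(\lambda)}(\mathbf f)$ with $\overline\lambda=1/m$. Its essential $m$-dependence is an increasing dynamic term $(1+b)\sqrt{m\rho_m P_T(\mathbf f)\Delta_{1:T}^2/2}$, an increasing feature-count term $\tfrac{m}{2}(Y+\widetilde Y)^2\ln(1+a^2T)$ (since $\overline\lambda m=1$), and a decreasing approximation term $\tfrac{a^2R^2T}{2m}$, the remaining terms being of lower order. The first key observation is that the feature-count term is monotone in $m$, hence uniformly bounded by its value at $m_{\max}=O(\sqrt T)$, which already yields the $(Y+\widetilde Y)^2\sqrt T\ln T$ contribution for every admissible $m$. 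It then remains to optimize the genuine $\sqrt m$-versus-$1/m$ trade-off between the dynamic and approximation terms. The unconstrained minimizer is $m_*\asymp\big(a^2R^2T/((1+b)\sqrt{\rho_\infty P_T(\mathbf f)\Delta_{1:T}^2})\big)^{2/3}$, at which both terms equal $\Theta\big(((1+b)^2a^2\rho_\infty R^2P_T(\mathbf f)\Delta_{1:T}^2T)^{1/3}\big)$. Because $\mathcal S_m$ is geometric with ratio $2$, for $1\le m_*\le m_{\max}$ I can pick $m_k$ with $m_k\le m_*\le 2m_k$, losing only a constant factor and producing the cube-root term of~\eqref{main_bound} (the factor $(1+a^2)$ absorbs both the $a^2$ from the approximation term and the $2Ra^2/m$ correction in $\rho_m$). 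If $m_*>m_{\max}$ (the nearly-stationary regime, including $P_T(\mathbf f)=0$) I take $m=m_{\max}$, where the approximation term equals $\Theta(a^2R^2\sqrt T)$ and dominates the smaller dynamic term, giving the $a^2R^2\sqrt T$ contribution; if $m_*<1$ I take $m=1$ and the cube-root term still upper-bounds the loss. Adding the lower-order meta- and remaining terms over these three regimes gives~\eqref{main_bound}.

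The hard part will be the discrete, bounded optimization in the last step: one must simultaneously handle the monotone feature-count term, whose uniform price $\sqrt T\ln T$ is dictated by the grid cap $m_{\max}\asymp\sqrt T$, and the genuine $\sqrt m$-versus-$1/m$ trade-off, and verify that the boundary case $m_*>m_{\max}$ is precisely what converts the would-be cube-root term into the $a^2R^2\sqrt T$ term. A secondary technical point is the pathwise control of the feature experts' prediction magnitudes $\sum_t\zeta_{t,m}^2$, since each feature expert is itself a two-level aggregate; this is handled by applying Lemma~\ref{lem:4} with the zero comparator and Jensen's inequality before carrying out the optimization.
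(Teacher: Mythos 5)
Your proposal follows the paper's proof in all essentials: the same top-level meta/expert decomposition, the same control of the top-level VAW meta-regret via Lemma~\ref{lem:4} applied to the zero comparator plus Jensen's inequality (yielding $O(Y^2\ln^2 T)$), and the same $\sqrt m$-versus-$1/m$ optimization of the Lemma~\ref{lem:4} bound giving the cube-root term. The one genuine difference is how the case analysis is organized. The paper splits on whether the hints are ``perfect'' ($\tfrac12\Delta_{1:T}^2\le Y^2\sqrt T\ln T$, handled by the $m=0$ expert) or ``imperfect'', and uses the imperfect-hints lower bound together with $P_T(\mathbf f)=\Omega(1)$ to show the unconstrained minimizer is $o(\sqrt T)$ and hence lies in the grid; you instead split on where $m_*$ falls relative to $[1,m_{\max}]$, bound the feature-count term uniformly by its value at $m_{\max}\asymp\sqrt T$, and handle the upper boundary by taking $m=m_{\max}$, where the approximation term $\Theta(a^2R^2\sqrt T)$ dominates. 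That treatment of the upper boundary is correct and arguably cleaner, since it dispenses with the hint dichotomy. Two points need repair, though. First, your lower-boundary claim --- that for $m_*<1$ taking $m=1$ ``the cube-root term still upper-bounds the loss'' --- is false as a pointwise statement: there $g(1)=A+B$ with $A>2B$, and $A\le C(A^2B)^{1/3}$ is equivalent to $A\le C^3B$. It can be rescued by invoking $P_T(\mathbf f)=O(RT)$ and $\Delta_{1:T}^2=O((Y+\widetilde Y)^2T)$, which force $A=O(B)$ with a parameter-dependent constant, but this must be said explicitly (the paper quietly assumes its $m_d\ge1$ and carries the same latent issue). Second, $RP_T(\mathbf f)/m$ is not of lower order than $a^2R^2T/(2m)$ --- both are $\Theta(R^2T/m)$ since $P_T(\mathbf f)=O(RT)$ --- which is precisely why the paper keeps it inside $B$ and why the $(1+a^2)$ factor appears in \eqref{main_bound}; dropping it changes the stated parameter dependence, not the rate in $T$.
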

\begin{proof}
 Let $\zeta_{t,m}$ be the prediction of the Level 2 expert $\mathcal{A}_m(\lambda)$ at time $t$, and let $\zeta_t=\langle\beta_t,\zeta_t\rangle=\sum_{j=0}^m \beta_{t,j}\zeta_{t,j}$ be the final prediction of the top-level algorithm $\mathcal{A}$. For any choice of expert $m_* \in \mathcal{S}_m$, the total regret of $\mathcal{A}$ can be decomposed as:
\begin{align*}
R_T^{\mathcal{A}}(\mathbf{f}) &= \underbrace{R_T^{\mathcal{A}}(e_{m*})}_{\text{top-level meta-regret}} + \underbrace{R_T^{\mathcal{A}_{m_*}}(\mathbf{f})}_{\text{oracle expert regret}},\\
R_T^{\mathcal A}(e_m^*)&=\frac{1}{2}\sum_{t=1}^T(\langle\beta_t,\zeta_t\rangle-y_t)^2-\frac{1}{2}\sum_{t=1}^T(\zeta_{t,m^*}-y_t)^2,\\
R_T^{\mathcal{A}_{m_*}}(\mathbf{f})&=\frac{1}{2}\sum_{t=1}^T(\zeta_{t,m^*}-y_t)^2-\frac{1}{2}\sum_{t=1}^T(f_t(x_t)-y_t)^2.
\end{align*} 
For notational simplicity we do not show the dependence of the second-level expert algorithms $\mathcal A_m$ on $\lambda$ in the course of the proof.

\emph{(1) Bounding the top-level meta-regret.}
The top-level algorithm $\mathcal{A}$ is a VAW instance with parameter $\lambda=1$ and $N_m = |\mathcal{S}_m| = O(\ln T)$ experts. Its expected regret against any fixed expert $e_{m_*}$ is bounded by \eqref{2.4}: 
\begin{align} \label{3.8}
\mathsf{E} R_T^{\mathcal{A}}(e_{m_*}) \le \frac{1}{2} + \frac{N_m Y^2}{2} \ln\left(1 + \frac{1}{N_m} \sum_{m \in \mathcal{S}_m} \sum_{t=1}^T \mathsf{E}[\zeta_{t,m}^2]\right)
\end{align}
by Jensen's inequality. The grid $\mathcal{S}_m$ has $N_m =  O(\ln T)$ experts. We bound the expected sum of squared predictions used in the VAW meta-regret bound:
\[
\sum_{m \in \mathcal{S}_m} \sum_{t=1}^T \mathsf{E}[\zeta_{t,m}^2] \le \sum_{t=1}^T \mathsf{E}[\zeta_{t,0}^2] + \sum_{m \in \mathcal{S}_m, m>0} \sum_{t=1}^T \mathsf{E}[\zeta_{t,m}^2].
\]
The first term is  bounded by $T\widetilde{Y}^2$. For the second sum, we use the bound 
\[\sum_{t=1}^T\mathsf{E}[\zeta_{t,m}^2] \le 2\sum_{t=1}^T\mathsf{E}[(\zeta_{t,m}-y_t)^2]+2\sum_{t=1}^Ty_t^2= 4\mathsf{E}R_T^{\mathcal{A}_m}(\mathbf{0}) + 4TY^2,\]
where from Lemma \ref{lem:4} with $P_T(\mathbf f)=0$, $R=0$, $\overline\lambda=1/m$, we get
\[\mathsf{E} R_T^{\mathcal{A}_m}(\mathbf{0}) = O(m(Y+\tilde{Y})^2 \ln T + Y^2\ln T).\]
Note that $\sum_{m \in \mathcal{S}'_m, m>0} m = \sum_{j=0}^{\lceil\frac{1}{2}\log_2 T\rceil} 2^j = O(\sqrt{T})$. Hence,
\begin{align*}
\sum_{m \in \mathcal{S}_m} \sum_{t=1}^T \mathsf{E}[\zeta_{t,m}^2] &=T\widetilde Y^2+\sum_{m\in \mathcal  S_m, m>0}(4\mathsf{E}R_T^{\mathcal{A}_m}(\mathbf{0}) + 4TY^2)\\
&=T\widetilde Y^2+O((Y+\tilde{Y})^2 \sqrt{T}\ln T+Y^2(\ln T)^2)+Y^2T\ln T)=O(T\ln T).
\end{align*}
Now \eqref{3.8} implies
\begin{align} \label{3.9}
\mathsf{E} R_T^{\mathcal{A}}(e_{m_*}) = O(Y^2(\ln T)^2).
\end{align}

\emph{(2) Bounding the expert regret in the static case.}
Let $P_T(\mathbf f)=0$. To balance the dominant terms 
\[ \frac{m}{2} (Y+\widetilde Y)^2  \ln\left(1+a^2 T\right) + \frac{a^2 R^2 T}{2m} \]
in \eqref{bound_of_lemma_4} take $m=m_s$ equal to the upper bound of the grid: $\sqrt T\le m_s=2^{N_m}\le 2\sqrt T.$ Then
\begin{align} \label{3.10}
\mathsf E R_T^{\mathcal A_{m_s}}(\mathbf f)=O((Y+\widetilde Y)^2 \sqrt T\ln T+a^2 R^2 \sqrt T).    
\end{align}

\emph{(3) Bounding the expert regret in the dynamic case.} Let $P_T(\mathbf f)>0$. First assume that $\widetilde y_t$ is a sequence of ``prefect'' hints:
\[ \frac{1}{2}\Delta_{1:T}^2=\frac{1}{2}\sum_{t=1}^T (y_t-\tilde y_t)^2\le Y^2\sqrt T\ln T. \]
Put $m=0$ then
\begin{align} \label{3.11}
 \mathsf E R_T^{\mathcal A_0}(\mathbf f)=\frac{1}{2}\sum_{t=1}^T(\tilde y_t-y_t)^2-\frac{1}{2}\sum_{t=1}^T(f_t(x_t)-y_t)^2=O(Y^2\sqrt T\ln T).
\end{align}
This bound is of the same order or better than the static rate derived above.

Finally, consider the main case of ``imperfect'' hints:
\begin{equation} \label{imperfect}
\frac{1}{2}\Delta_{1:T}^2\ge Y^2\sqrt T\ln T.     \end{equation}
Let us rewrite the bound \eqref{bound_of_lemma_4} explicitly:
\begin{align}
\mathsf E R_T^{\mathcal A_m}(\mathbf f) &\le (1+b)\sqrt{\frac{m \rho_m P_T(\mathbf f) \Delta_{1:T}^2}{2}}+ \frac{1}{m}\left(R P_T(\mathbf f)+\frac{a^2 R^2 T}{2} +\frac{R^2}{2}\right) + \frac{1}{2} (Y+\tilde Y)^2 \nonumber \nonumber\\
& \quad + \frac{m}{2} (Y+\widetilde Y)^2  \ln\left(1+a^2 T\right) + \frac{\lambda}{2} + \frac{M Y^2 }{2} \ln \left(1+\frac{Z_{T,m}^2}{\lambda M}\right). \label{3.12}
\end{align}
Consider the dominant term 
$$ g(m)=A\sqrt{m}+\frac{B}{m},\quad A=(1+b)\sqrt{\frac{\rho_\infty P_T(\mathbf f) \Delta_{1:T}^2}{2}},\quad B=R P_T(\mathbf f)+\frac{a^2 R^2 T}{2}, $$
$\rho_\infty=\lim_{m\to\infty} \rho_m=a(aR+Y). $ 
The global minimum $m_d$ of $g$ over $(0,\infty)$ exists, since $g$ tends to $\infty$ as $m\to +0$ and $m\to +\infty$. It coincides with the unique stationary point 
\begin{align*}
   m_d=\left(\frac{2B}{A}\right)^{2/3}=O\left(\left(\frac{T}{\sqrt{P_T(\mathbf f)\Delta_{1:T}^2}}\right)^{2/3}\right)=O\left(\left(\frac{T}{\sqrt{P_T(\mathbf f)\sqrt T\ln T}}\right)^{2/3}\right). 
\end{align*}
where we used the fact that $P_T(\mathbf f)=O(T)$ and the inequality \eqref{imperfect}. Since $P_T(\mathbf{f}) > 0$, we have $P_T(\mathbf{f}) = \Omega(1)$. Thus
\[ m_d=O\left(\frac{T^{3/4}}{(\ln T)^{1/2}} \right)^{2/3}=O\left(\frac{T^{1/2}}{(\ln T)^{1/3}}\right)=o(T^{1/2}).\]
This proves that $m_d\le N_m$ for sufficiently large $T$.
Take an expert $m'_d\in \mathcal{S}_m$ that provides a multiplicative approximation to $m_d$:
\[
m_d/2 < m'_d \le m_d.
\]
Substituting $m_d'$ into \eqref{3.12}, we get
\begin{align*}
\mathsf E R_T^{\mathcal A_{m_d'}}(\mathbf f) &=O\left( (1+b)\sqrt{m_d' \rho_\infty P_T(\mathbf f) \Delta_{1:T}^2}+ \frac{1}{m_d'}\left(R P_T(\mathbf f)+a^2 R^2 T \right) \right. \nonumber\\
& \left. \quad + m_d' (Y+\widetilde Y)^2  \ln\left(1+a^2 T\right)  + M Y^2 \ln \left(1+\frac{Z_{T,m_d'}^2}{\lambda M}\right) \right)\\
&=O\left(g(m_d)+m_d(Y+\widetilde Y)^2  \ln T+M Y^2 \ln Z_{T,m_d}\right). 
\end{align*}
Here
\begin{align*}
 g(m_d)&=O((A^2 B)^{1/3})=O\left(\left((1+b)^2\rho_\infty P_T(\mathbf f)\Delta_{1:T}^2(RP_T(\mathbf f)+aR^2 T)\right)^{1/3}\right)\\
 &=O\left(\left((1+b)^2(1+a^2)\rho_\infty R^2 P_T(\mathbf f)\Delta_{1:T}^2 T)\right)^{1/3}\right), 
\end{align*}
and other terms are bounded better than the expected regret in the static case:
\[m_d(Y+\widetilde Y)^2  \ln T+M Y^2 \ln Z_{T,m_d}=(Y+\widetilde Y)^2 o(\sqrt T)+M Y^2 O(\ln T).\]
Combining this result with \eqref{3.10}, \eqref{3.11}
we can write the common upper bound for the oracle expert:
\begin{align*}
R_T^{\mathcal{A}_{m_*}}(\mathbf{f})=O(\left((1+b)^2(1+a^2)\rho_\infty R^2 P_T(\mathbf f)\Delta_{1:T}^2 T)\right)^{1/3}+(Y+\widetilde Y)^2 \sqrt T\ln T+a^2 R^2 \sqrt T).  
\end{align*}
This bound is the same as \eqref{main_bound}, since \eqref{3.9} is of a lower order.
\end{proof}

\begin{remark}
The bound \eqref{main_bound} consists of a dynamic component dependent on the path length $P_T(\mathbf{f})$ and a static component. The last two terms in \eqref{main_bound} are related to the static regret. Although formally the last term can be dropped due to the $\ln T$ factor in the second-to-last term, we retain it. This is because in practical scenarios, the norm bound $R$ for the comparator sequence can be large relative to the label and hint bounds $Y$ and $\widetilde{Y}$. Thus, the third term can be larger than the second one for realistic time horizons, and keeping it provides a more complete characterization of the algorithm's performance. However, this bound can also be written in a shorter form (\ref{1.1}), since $\Delta_{1:T}^2=O(T)$.  
\end{remark}

\begin{remark}
The H-VAW-D algorithm has a hierarchical structure. To estimate its computational complexity note that a single DVAW expert $\mathcal{A}_{m,\gamma_k}$ operating on $m$ random features has a computational cost $O(m^2)$ due to the updating rule \eqref{inverse_Sigma_update}. The meta-algorithm $\mathcal A_m$ runs $M=|\mathcal S_\gamma|=O(\ln T)$ base DVAW experts, and updates its own VAW state using the $M$-dimensional vector of their predictions. The total per-step cost of $\mathcal A_m$ is the sum of these components:
\[\operatorname{Cost}(\mathcal A_m)=O(m^2 M+ M^2)=O(m^2\ln T+\ln^2 T).\]

Furthermore, $\mathcal A$ VAW meta-algorithm runs all $\mathcal A_m$ experts for $m\in\mathcal S_m$ and updates its own $N_m$ dimensional state, where $N_m=|\mathcal S_m|=O(\ln T)$. Thus, the per-step complexity of $\mathcal A$ is estimated by
\
\begin{align*}
\operatorname{Cost}(\mathcal A)&=\sum_{m\in\mathcal S_m}\operatorname{Cost}(\mathcal A_m)+O(N_m^2)
= \sum_{j=1}^{\lceil\frac{1}{2}\log_2 T\rceil} O((2^j)^2 \ln T+\ln^2 T) + O(\ln^2 T)\\
&=O( 4^{\frac{1}{2}\log_2 T}\ln T)+O(\ln^3 T)=O(T\ln T).
\end{align*}
\end{remark}

\section{Conclusion}\label{sec:conclusion}
In this work we addressed the challenging problem of non-parametric non-stationary online regression in an adversarial setting. By synthesizing the discounted Vovk-Azoury-Warmuth (DVAW) framework of \cite{Jacobsen2024} with a random feature approximation, we developed the H-VAW-D algorithm,
a hierarchical forecaster capable of competing against a time-varying sequence of functions in a reproducing kernel Hilbert space. Our theoretical analysis shows that this fully adaptive method achieves an expected dynamic regret of $O(T^{2/3}P_T^{1/3} + \sqrt{T}\log T)$, where $P_T$ is the functional path length of a comparator sequence.

There are several possibilities for future research. First, while our method avoids the $O(T^3)$ cost of exact kernelization, its $O(T^2 \ln T)$ overall complexity is still very large. This cost is a direct consequence of maintaining a growing grid of experts for each discount factor and feature dimension. It may be possible to mitigate this complexity by applying the Follow-the-Leading-History (FLH) algorithm developed for adaptive regret in \cite{Hazan2009efficient}. Second direction, related to the previous one, concerns the computer experiments with the proposed algorithm on non-stationary data. Finally, a key theoretical question is the optimality of our regret bound. Establishing a lower bound for dynamic regret in the RKHS setting is essential to determine whether the $O(T^{2/3}P_T^{1/3})$ is optimal for this problem class.      

\printbibliography

\end{document}